\renewcommand\@makefntext[1]{%
  \hbadness=10000
  \leavevmode
  \rlap{\hss\@thefnmark\enspace}%
  #1%
}
\pgfplotsset{compat=1.13}
\definecolor{node_fill}{RGB}{160, 221, 232}
\newtheorem{problem}{Problem} 
\newtheorem{proposition}{Proposition}[section]
\newtheorem{lemma}{Lemma}[section]
\newtheorem{corollary}{Corollary}[section]
\newtheorem{theorem}{Theorem}[section]
\theoremstyle{definition}
\theoremstyle{remark}
\newtheorem*{remark}{Remark}
\newcommand{\glsr}{{\tt GLSR}\xspace}
\newcommand{\lsr}{{\tt LSR}\xspace}
\newcommand{\clsr}{{\tt C-LSR}\xspace}
\newcommand{\pmg}{{\tt PMG}\xspace}
\newcommand{\pmt}{{\tt PMT}\xspace}
\newcommand{\objects}{{\mathcal{O}}}
\newcommand{\arrangement}{\pi}
\newcommand{\tpap}{pop-and-push\xspace}
\newcommand{\tpapa}{pop-and-push action\xspace}
\title{Efficient, High-Quality Stack Rearrangement}
\author{%
Shuai D. Han\quad Nicholas M. Stiffler\quad Kostas E. Bekris\quad Jingjin Yu
\thanks{S. D. Han, N. M. Stiffler, K. E. Bekris, and J. Yu 
        are with the Department of Computer Science, Rutgers, the State University of New Jersey, Piscataway, NJ, USA. 
        \{{\tt shuai.han, nick.stiffler, kostas.bekris, jingjin.yu}\}\hspace*{.25em}\MVAt \hspace*{.25em}rutgers.edu%
}%
\thanks{This work is supported by NSF awards IIS-1617744, IIS-1451737
  and CCF-1330789, as well as internal support by Rutgers
  University. Opinions or findings expressed here do not
  reflect the views of the sponsor.}
}
\begin{document}

\maketitle

\thispagestyle{empty}
\pagestyle{empty}




\begin{abstract}

This work studies rearrangement problems involving the sorting of
robots or objects in stack-like containers, which can be accessed only
from one side.  Two scenarios are considered: one where every robot or
object needs to reach a particular stack, and a setting in which each
robot has a distinct position within a stack. In both cases, the goal
is to minimize the number of stack removals that need to be performed.
Stack rearrangement is shown to be intimately connected to pebble
motion problems, a useful abstraction in multi-robot path planning. Through
this connection, feasibility of stack rearrangement can be readily
addressed. The paper continues to establish lower and upper bounds on
optimality, which differ only by a logarithmic factor, in terms of
stack removals. An algorithmic solution is then developed that
produces suboptimal paths much quicker than a pebble motion solver.
Furthermore, informed search-based methods are proposed for finding
high-quality solutions.  The efficiency and desirable scalability of
the methods is demonstrated in simulation.

\end{abstract}
\section{Introduction}\label{sec:intro}

Many robotic applications involve the handling of multiple stacks. For
instance, spatial restrictions faced by growing urban areas already
motivate stackable parking lots for vehicles, as in
Fig.~\ref{fig:gravity}(a).  With the advent of autonomous vehicles,
such solutions will become increasingly popular and will require
automation. Similarly, products in convenience stores are frequently
arranged in ``gravity flow'' shelving units depending on their type,
as in Fig.~\ref{fig:gravity}(b). Such stacks of products and materials
arise in the industry where a robot is able to interact with the foremost
object and perform operations similar to a ``push'' or a ``pop'' of a stack 
data structure.





In the above stack rearrangement setups, the objective may be to
remove a specific object from the stack (e.g., a specific car from the
stackable parking) or to rearrange the objects into a specific
arrangement, which specifies the location of each object within a
stack (e.g., a Hanoi tower-like setting).  High quality solutions are
highly desirable for the applications, especially with regards to the
number of stack removals. Otherwise, an exorbitant amount of time is
spent performing redundant actions, which reduces efficiency or
appears unnatural to people.

Through a reduction to the pebble motions problem, which is
well-studied in the multi-robot literature, the feasibility of stack
rearrangement can be readily decided.  A naive feasible solution,
however, can be far from optimal in minimizing stack removals.
Adapting a divide-and-conquer technique, this paper establishes lower
and upper bounds on this number that differ by a mere logarithmic
factor.  Results are provided both for objects that need to be placed
in the right stack as well as the more general case where objects need
to acquire a specific position in the stack. Finally, the paper
considers both optimal and sub-optimal informed search methods and
proposes effective heuristics for stack rearrangement. This leads to
an experimental evaluation of the different algorithms and heuristics,
which suggests a combination that scales nicely with the number of
stackable objects.

\begin{figure}[tb!]
  \centering
  \vspace*{.5em}
  \begin{tabularx}{\linewidth}{@{}X@{}X@{}}
  \includegraphics[keepaspectratio, height=24.8mm]{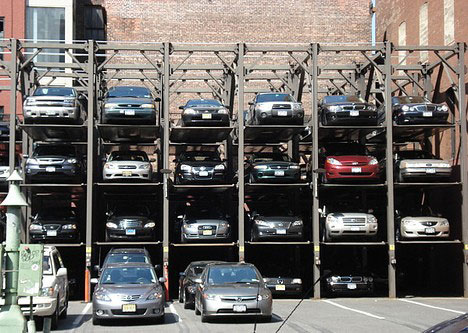} &
  \includegraphics[keepaspectratio, height=24.8mm]{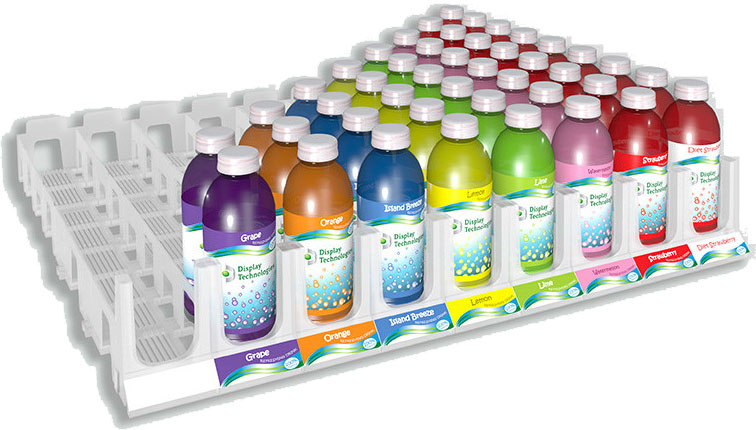}
  \end{tabularx}
  \vspace{0.01in}
  \caption{(a) Stackable parking lots are expected to become even more
  popular in urban environments with the advent of autonomous
  vehicles. (b) Rearranging stacks of objects is a task frequently
  encountered in convenience and grocery stores.}
\label{fig:gravity}
\end{figure}




{\bf Related Work:} \emph{Multi-body planning} is itself hard. In the
continuous case, complete approaches do not scale even though methods
try to decrease the effective DOFs \cite{AroBer+99}. For specific
geometries, e.g., unlabeled unit-discs among polygons, optimality is
possible \cite{SolYu+15}, even though the unlabeled case is still hard
\cite{SolHal15}. Given the problem's hardness, decoupled methods, such
as priority-based schemes \cite{BerOve05} or velocity tuning
\cite{LerLauSim99}, trade completeness for efficiency.


Recent progress has been achieved for the discrete problem variant,
where robots occupy vertices and move along edges of a graph. For this
``pebble motion on a graph'' problem \cite{KorMilSpi84, CalDumPac08,
  AulMon+99, GorHas10}, feasibility can be answered in linear time and
paths can be acquired in polynomial time~\cite{KroLunBek13, LunBek11,
  WagKanCho12, YuLaV12}. The optimal variation is still hard but
recent optimal solvers with good practical efficiency have been
developed \cite{WagKanCho12, YuLaV12, SharSte+15, YuLaV16}. The
current work is motivated by this progress and aims to show that for
stack rearrangement it is possible to come up with practically
efficient algorithms.


General rearrangement planning \cite{BenRiv98, Ota04} is also hard,
similar to the related ``navigation among movable obstacles'' ({\tt
NAMO}) \cite{Wil91, CheHwa91, DemORoDem00, NieStaOve06, BerSti+08},
which can be extended to manipulation among movable obstacles ({\tt
MAMO}) and related challenges \cite{StiSch+07, HavOzb+14, SriFan+14,
GarLozKae14, KroSho+14, KroBek15a, KroBek16}.  These efforts focus on
feasibility and no solution quality arguments have been provided.
A recent work has focused, similar to
the current paper, on high-quality rearrangement solutions but in the
context of manipulation challenges in tabletop
environments \cite{HanSti+17}.




%
\section{Problem Formulation}\label{sec:problem}


Assume $n$ objects $\objects = \{o_1, \ldots, o_n\}$ that occupy $w +
1$ last-in-first-out (LIFO) queues, i.e., {\em stacks}, where $w \geq
2$, since 2-stack rearrangement is impossible. Elements can only be
added or removed from one end of the data structure, often referred to
as the ``top''.  Furthermore, each stack has an integer {\em depth} of
$d \ge 1$, corresponding to the maximum stack capacity.  An object at
the top of a stack has a depth of $1$.  The assumption is that $n \le
wd$, unless specified otherwise.



Modeling many real world problems, the assumption is that objects in a
stack always occupy contiguous positions, e.g., if the top object is
removed from a stack in Fig.~\ref{fig:gravity}(b), the remaining
objects in the stack will ``slide'' to the front. Similarly, as an
object is pushed onto a stack, the existing objects will shift
backwards by one position.  It is straightforward to see that the two
versions of the problem, i.e., a top-down or a bottom-up stack, as
shown in Fig.~\ref{fig:abstract-gravity}, induce the same problem.
For consistency, the bottom-up setting is used for the remainder of
the paper.

\begin{figure}[ht]
  \vspace{0.02in}
    \centering
    \begin{tabular}{ccc}
        \begin{tikzpicture}
            \foreach \nodeName/\nodeLocation in {
                1/{(0.5,1.5)},2/{(0.5,1)},3/{(0.5,0.5)},4/{(1.5,1)},5/{(1.5,0.5)},6/{(2.5,0.5)}}{
                \node[thick,shape=rectangle,draw=black,fill=node_fill,text=black,minimum width=0.8cm,minimum height=0.4cm] 
                (\nodeName) at \nodeLocation {\small $\nodeName$}; 
            }
            \draw[-,thick] (0,0.2) to (0,1.8);
            \draw[-,thick] (0,1.8) to (3,1.8);
            \draw[-,thick] (3,1.8) to (3,0.2);
            \draw[-,thick] (2,1.8) to (2,0.2);
            \draw[-,thick] (1,1.8) to (1,0.2);
        \end{tikzpicture}
        &&
        \begin{tikzpicture}
            \foreach \nodeName/\nodeLocation in {
                1/{(0.5,1.5)},2/{(0.5,1)},3/{(0.5,0.5)},4/{(1.5,1.5)},5/{(1.5,1)},6/{(2.5,1.5)}}{
                \node[thick,shape=rectangle,draw=black,fill=node_fill,text=black,minimum width=0.8cm,minimum height=0.4cm] 
                (\nodeName) at \nodeLocation {\small $\nodeName$}; 
            }
            \draw[-,thick] (0,0.2) to (0,1.8);
            \draw[-,thick] (0,1.8) to (3,1.8);
            \draw[-,thick] (3,1.8) to (3,0.2);
            \draw[-,thick] (2,1.8) to (2,0.2);
            \draw[-,thick] (1,1.8) to (1,0.2);
        \end{tikzpicture}\\
        {\footnotesize (a) Top-down} && {\footnotesize (b) Bottom-up} \\
    \end{tabular}
  \vspace{0.05in}
    \caption{Visualization of the abstract problem where objects 
            (a) slide to the front of the stacks,
            (b) are pulled to the bottom of the stack.
    }
    \label{fig:abstract-gravity}
\end{figure}
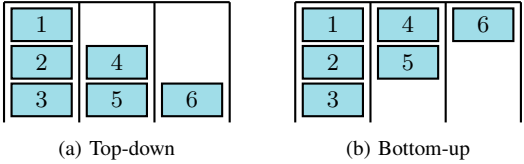


Using this setup, an object that currently resides at the top of stack
$i$ can be transferred to an arbitrary stack $j$ via a {\em \tpap
  action}, denoted as $a_{ij}$.  A {\em permissible \tpap action}
constrains the above definition by requiring that $i$ be non-empty,
and that $j$ not currently be at capacity.

An {\em arrangement} is an injective mapping $\pi: \mathcal O \to
\mathbb N^2$, $o_i \mapsto \pi(o_i)$.  Here, $\pi(o_i)$ is a 2-tuple
$(\pi^1(o_i), \pi^2(o_i))$ in which $1 \le \pi^1(o_i) \le w$ and $1
\le \pi^2(o_i) \le d$ are the stack and depth locations of $o_i$,
respectively.  The paper primarily focuses on two main problems,
defined as follows.

\begin{problem}{\bf Labeled Stack Rearrangement (\lsr).}
	Given $\langle \objects, w, d, \arrangement_I, \arrangement_G
        \rangle$, compute a sequence of \tpap actions $A =
        (a_{i_1j_1}, a_{i_2j_2},\dots)$ that move the objects one at a
        time from an initial $\arrangement_I$ to a goal arrangement
        $\arrangement_G$.
\end{problem}

{\hbadness=10000
\begin{problem}{\bf Column-Labeled Stack Rearrangement (\clsr).}
    Similar to \lsr, but the objects are only required to be moved to their
    goal stacks without a specific depth. That is, $\pi_G^{2}$ is 
	left unspecified for all objects. 
\end{problem}
}


Whereas \clsr can be viewed as a sub-problem in approaching \lsr, it
has practical incarnations - perhaps even more so than \lsr. For
example, in retail, it is almost always the case that a shelf slot
holds the same type of product (e.g.,
Fig.~\ref{fig:gravity}(b)). Solving \clsr then corresponds to
rearranging an out of order shelf so that each stack holds only a
single type of product.

In this paper, the {\em optimization objective} is to minimize the
number of actions taken, i.e. $|A|$.  For robotic manipulation, the
objective models the required number of grasps by the robotic
manipulator, which is frequently the limiting factor.
\section{Structural Analysis}\label{sec:structure}



A closely related problem is {\em Pebble Motion on Graphs}
(\pmg)~\cite{KorMilSpi84}: suppose an undirected graph $G = (V, E)$
has $p < |V|$ pebbles placed on distinct vertices and which can move
sequentially to adjacent empty vertices.  Given a \pmg\ instance
$\langle G, x_I, x_G \rangle$, the goal of \pmg is to decide if the
{\em configuration} $x_G$ is reachable from $x_I$, and to subsequently
find a sequence of moves to do so when possible.  When $G$ is a tree,
this problem is referred to as {\em Pebble Motion on Trees} (\pmt).
The considered versions of \lsr (and \clsr) are \pmt problems.

\begin{figure}[ht]
    \centering
    \begin{tabular}{ccc}
        \begin{tikzpicture}
            \foreach \nodeName/\nodeLocation in {
                1/{(0.5,1.5)},2/{(0.5,1)},3/{(0.5,0.5)},4/{(1.5,1.5)},5/{(1.5,1)},6/{(2.5,1.5)}}{
                \node[thick,shape=rectangle,draw=black,fill=node_fill,text=black,minimum width=0.8cm,minimum height=0.4cm] 
                (\nodeName) at \nodeLocation {\small $\nodeName$}; 
            }
            \draw[-,thick] (0,0.2) to (0,1.8);
            \draw[-,thick] (0,1.8) to (3,1.8);
            \draw[-,thick] (3,1.8) to (3,0.2);
            \draw[-,thick] (2,1.8) to (2,0.2);
            \draw[-,thick] (1,1.8) to (1,0.2);
        \end{tikzpicture}
        &&
        \begin{tikzpicture}
            \foreach \nodeLocation in {
                {(0.5,0.5)},{(1.5,1.5)},{(2.5,0.5)},
                {(0.5,1.5)},{(1.5,0.5)},{(2.5,1.5)},
                {(0.5,1)},{(1.5,1)},{(2.5,1)},{(1.5,0)}}{
                \node[thick,shape=circle,draw=black,fill=black,minimum size=0.15cm,inner sep=0] at \nodeLocation {}; 
            }
            \draw[-,thick] (0.5,0.5) to (0.5,1.5);
            \draw[-,thick] (1.5,0.5) to (1.5,1.5);
            \draw[-,thick] (2.5,0.5) to (2.5,1.5);
            \draw[-,thick] (0.5,0.5) to (1.5,0);
            \draw[-,thick] (1.5,0.5) to (1.5,0);
            \draw[-,thick] (2.5,0.5) to (1.5,0);
            \foreach \nodeName/\nodeLocation in {
                1/{(0.5,1.5)},2/{(0.5,1)},3/{(0.5,0.5)},4/{(1.5,1.5)},5/{(1.5,1)},6/{(2.5,1.5)}}{
                \node[thick,shape=circle,draw=black,fill=node_fill,text=black,minimum size=0.4cm,inner sep=0.05cm] 
                (\nodeName) at \nodeLocation {\small $\nodeName$}; 
            }
        \end{tikzpicture}\\
        {\footnotesize (a) An \lsr instance} && {\footnotesize (b) A
          \pmt instance}
        \vspace{0.05in}
    \end{tabular}
    \caption{From \lsr to \pmt}
    \label{fig:sor-to-pmt}
    \vspace{-.1in}
\end{figure}

\begin{proposition}\label{p:sor-to-pmt} An \lsr instance is always 
reducible to a \pmt instance. In particular, a solution to the 
reduced \pmt instance is also a solution to the initial \lsr instance. 
\end{proposition}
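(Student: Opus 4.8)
The plan is to realize the reduction by building an explicit tree whose pebble dynamics mimic stack operations, as suggested by Fig.~\ref{fig:sor-to-pmt}. First I would construct $G=(V,E)$ as a ``spider'': introduce a single hub vertex $v_0$ together with, for each stack $s$, a path $v_{s,1} - v_{s,2} - \cdots - v_{s,d}$ of $d$ vertices, one per slot, and join the depth-$1$ (top) vertex $v_{s,1}$ of each path to the hub by an edge $\{v_{s,1}, v_0\}$. Every object $o_i$ becomes a pebble placed at the vertex $v_{\arrangement^1(o_i),\arrangement^2(o_i)}$ encoding its slot, which turns $\arrangement_I$ and $\arrangement_G$ into pebble configurations $x_I$ and $x_G$; the hub and the unused slots are exactly the empty vertices. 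Because $n \le wd < (w+1)d$, at least one full stack's worth of slots together with the hub are vacant, so $G$ genuinely has empty vertices and $v_0$ can be taken free in $x_I$.

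Next I would pin down the correspondence between \tpap actions and pebble moves. The key observation is that, within any stack-path, the only pebble free to move is the current top object, and it can move only toward the hub: every object lying deeper is blocked on its hub side by the object in front of it and on its far side by the object behind it (or the dead end of the path), while all of that stack's empty slots lie between its top object and $v_0$. Consequently a pebble can leave a stack only by being routed, top-first, into the hub, and can enter a stack only from the hub, sliding inward until it is blocked. Thus a permissible \tpapa $a_{ij}$ is realized precisely by the move sequence ``route the top pebble of stack $i$ to $v_0$, then route that pebble from $v_0$ into stack $j$,'' and this sequence is executable exactly when stack $i$ is nonempty and stack $j$ is below capacity (i.e., path $j$ still has a vacant slot), matching the permissibility conditions of the \tpapa.

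With $x_I \leftrightarrow \arrangement_I$ and $x_G \leftrightarrow \arrangement_G$, the reduced instance is $\langle G, x_I, x_G\rangle$, and it remains to read a \pmt solution back as an \lsr solution. Since $v_0$ is a single vertex, at most one object is ever in transit, so the move sequence splits, at the instants when the hub is empty, into episodes in each of which one pebble leaves some stack $i$ and comes to rest in some stack $j$; each such episode is exactly a \tpapa $a_{ij}$, and their concatenation is an \lsr solution carrying $\arrangement_I$ to $\arrangement_G$.

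The main obstacle is this final parsing step, because a generic \pmt solver need not respect stack etiquette: it may park a pebble partway along a path or interleave the inward sliding of one object with the outward routing of another, producing transient configurations that are not contiguous and hence are not valid arrangements. I expect to dispose of this by leaning on the same two structural facts---only the top of a stack is mobile, and the hub is a single shared vertex---to show that any such transient must be resolved before the contiguous target $x_G$ is reached, so that grouping the moves by hub-occupancy intervals (merging the possibly-split journey of a single object into one episode) always recovers a well-defined sequence of \tpap actions. At worst this bookkeeping introduces redundant actions that pop and push within the same stack, which are nonetheless legal \lsr moves, so the translation remains valid.
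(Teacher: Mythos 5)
Your plan is the paper's own reduction---the spider tree with hub $v_0$, arrangements mapped to pebble configurations, \tpap actions identified with trips through the hub, and the converse direction obtained by parsing a \pmt solution into hub-crossing episodes---so the skeleton is right, and your episode-parsing discussion fleshes out a step the paper dispatches in one sentence. However, as written your proof has a genuine flaw: the pebble placement you specify is oriented backwards, and it falsifies the structural facts on which your entire action-to-move dictionary rests. You place the object of depth $j$ at $v_{s,j}$ with the depth-$1$ (top) vertex $v_{s,1}$ adjacent to the hub. Since arrangements are contiguous with the top object at depth $1$, a stack holding $0<k<d$ objects then has its pebbles on the vertices at distances $1,\dots,k$ from $v_0$ and its empty vertices at the far, dead-end side of the path. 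This contradicts your own assertion that ``all of that stack's empty slots lie between its top object and $v_0$,'' and it breaks both directions of your correspondence: (i) the deepest pebble is \emph{not} blocked on its far side---it can retreat into the empty dead-end vertices, so it is false that only the top pebble is mobile; and (ii) a pebble at the hub cannot enter a non-empty stack at all, since the entrance vertex $v_{j,1}$ is occupied by that stack's top object. Consequently your claim that a permissible \tpapa $a_{ij}$ is realized by routing one pebble through $v_0$, ``executable exactly when stack $i$ is nonempty and stack $j$ is below capacity,'' is wrong under your encoding: pushing onto a non-empty stack would first require shifting every pebble of that stack one step deeper, i.e., $\Theta(d)$ moves of several pebbles rather than one pebble's journey.

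The fix is the orientation that the paper's Fig.~\ref{fig:sor-to-pmt} actually depicts: each stack's pebbles must occupy the vertices \emph{farthest} from the hub, so an object at depth $j$ in a stack currently holding $k$ objects sits at distance $d-k+j$ from $v_0$, and every empty slot of a stack lies between its top pebble and the hub. Under that encoding all of your structural claims become true (only the top pebble of a path can move, and only hubward; an entering pebble can slide inward until blocked; a permissible \tpapa is exactly one pebble's trip through $v_0$), and your hub-occupancy parsing of the converse direction goes through: it needs only that pebbles leave each path nearest-to-hub first, that relative order within a path is preserved, and that $x_G$ encodes the per-stack contents of $\arrangement_G$ in the correct order---facts that hold even when a \pmt solver parks pebbles at non-contiguous transient positions. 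So the gap is not in the strategy but in the encoding itself: the configurations $x_I, x_G$ you define are not the ones for which your claimed action--move equivalence is valid.
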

\vspace{-0.15in}
\begin{proof}

Given an \lsr instance $\langle \objects, w, d, \arrangement_I,
\arrangement_G \rangle$, as shown in Fig.~\ref{fig:sor-to-pmt}, the
tree graph $T = (V, E)$ in the \pmt\ instance is obtained by first
viewing each stack as a path of length $d$, and then joining the top
vertices of these stacks with a root vertex, which builds the
connection between them. This yields $|V| = ((w + 1)d) + 1$ vertices.
It is clear that object arrangements $\pi_I$ and $\pi_G$ directly map
to configurations $x_I$ and $x_G$ of a \pmt instance. Note that a
\tpap action in the \lsr solution is equivalent to moving one pebble
from a path on $T$ to another path through the root vertex.
Similarly, given a solution to the \pmt instance, a solution to the
\lsr instance can be constructed by treating a pebble passing through
the root as a \tpap action.
\end{proof}

\vspace{-0.1in}
Given the relationship between \pmt and \lsr, and that finding optimal
solutions (i.e., a shortest solution sequence) for \pmg and many of
its variants is NP-hard \cite{Gol11,RatWar90}, there is evidence to
believe that optimally solving \lsr (i.e., minimizing the number of
actions) is also NP-hard.


In terms of feasibility, the \lsr\ problem is always feasible as
defined. This is due to the assumption that $n \leq wd$ while the
total number of slots in the stacks are $(w+1)d$. This allows to
always clear one stack of depth $d$ and then the elements in the
remaining stacks can be arranged with the aid of the empty one.
Consider, however, a more general version, called \glsr, that allows
for $n$ to exceed $wd$.  So there may be fewer than $d$ buffers
available to rearrange objects.

Note that Proposition~\ref{p:sor-to-pmt} still holds for \glsr. The
mapping from \glsr to \pmt immediately leads to algorithmic solutions
for \glsr (and therefore, \lsr). By Proposition~\ref{p:sor-to-pmt}, a
\glsr instance is feasible if and only if the corresponding \pmt
instance is so. The feasibility test of \pmt can be performed in
linear time \cite{AulMon+99}, so the same is true for \glsr as the
reduction can be performed also in linear time.

For a feasible \glsr, solving the corresponding \pmt can be performed
in $O(|V|^3)$ running time (and pebble moves) \cite{KorMilSpi84}. This
translates to a solution for \glsr that runs in $O(w^3d^3)$ time using
up to $O(w^3d^3)$ actions.  This result, however, does not extend to
\lsr (i.e., when $n \le wd$) as an \lsr instance is in fact always
feasible. \emph{It turns out that \lsr can be solved in less
  computation time and a reduced number of actions than using a \pmt
  solver}. This contribution is established in the following
proposition.

\vspace{-.05in}
\begin{proposition}\label{p:trivial-ub}An arbitrary \lsr can be solved using 
$O(wd^2)$ \tpap actions. 
\end{proposition}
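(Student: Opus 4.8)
The plan is to exploit the fact that, since every arrangement places objects only in stacks $1,\dots,w$, the $(w{+}1)$-th stack is empty under both $\arrangement_I$ and $\arrangement_G$ and can serve as a dedicated buffer of capacity $d$ throughout. Using this buffer, I would move objects into their final goal positions one at a time, in an order that guarantees a placed object is never touched again; charging $O(d)$ \tpap actions to each of the at most $n \le wd$ placements then yields the claimed $O(wd^2)$ bound.

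Concretely, I would process the goal stacks $s = 1, \dots, w$ and, within each, fill the goal positions from the deepest (depth $d$) toward the top (depth $1$), so that the already-placed objects of stack $s$ always form a contiguous, finalized block resting at the bottom. To place the object $o$ assigned to $(s,k)$, I would (i) locate $o$ at some depth $\delta \le d$ in its current stack, (ii) pop the $\delta - 1$ objects lying above it into the buffer, (iii) move $o$ onto stack $s$, which---because $s$ holds exactly its $d-k$ finalized objects at that instant---lands $o$ precisely at depth $k$, and (iv) pop the buffered objects back. This costs at most $2d-1 = O(d)$ actions and, crucially, returns the buffer to empty so that it is reusable for the next placement.

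The two correctness invariants I would maintain are that (a) the buffer is empty between placements, and (b) the finalized objects of a goal stack sit at its bottom and are never popped. Invariant (b) holds because the objects displaced in step (ii) lie strictly above $o$, hence above any finalized block, so they are always non-finalized; and the buffer has capacity $d \ge \delta - 1$, so step (ii) never overflows it. Summing $O(d)$ over the $n \le wd$ objects, together with an $O(d)$ charge per goal stack to evacuate its original contents before filling begins, gives $O(nd + wd) = O(wd^2)$ actions in total.

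The main obstacle I anticipate is guaranteeing that the scratch space needed for the pop-and-restore is genuinely available in \emph{tightly packed} instances, i.e., when $n$ is close to $wd$ and nearly every slot is occupied, so that evacuating the contents of stack $s$ and then exposing buried target objects compete for the single depth-$d$ buffer. I would address this exactly as in the feasibility argument: since $n \le wd$ leaves at least one full stack's worth ($d$ slots) of free space that can be kept consolidated as an empty stack, one empty depth-$d$ stack always suffices to shuffle the blocking objects aside and back. Verifying that this bookkeeping can be carried out while preserving invariants (a) and (b)---and in particular that clearing stack $s$ and restoring displaced objects can be interleaved without exceeding capacity---is the one place where care, rather than new ideas, is required; the $O(wd^2)$ count is immediate once it is in place.
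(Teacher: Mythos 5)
There is a genuine gap, and it sits exactly where you flagged it --- but it is not a matter of ``care rather than new ideas.'' Your scheme requires two things to hold simultaneously at the moment the object destined for $(s,k)$ is placed: the buffer is empty (your invariant (a)), and stack $s$ contains only its $d-k$ finalized objects. A slot count shows this is impossible in the tightly packed case $n = wd$, which the proposition must cover (the paper explicitly treats it as the hard case). Indeed, with stack $s$ holding $d-k$ objects and the buffer holding none, the remaining $wd - (d-k)$ objects must fit into the other $w-1$ stacks, whose total capacity is $(w-1)d = wd - d$; this forces $wd - d + k \le wd - d$, i.e.\ $k \le 0$, a contradiction for \emph{every} placement. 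Already the first step fails: evacuating stack $s$ fills the buffer exactly, so you can never again have an empty buffer to receive the $\delta - 1$ blockers in your step (ii), and the only free space left --- stack $s$ itself --- cannot serve as scratch without burying the position you are trying to fill. The problem is not shuffling blockers aside; it is that your evacuation target and your scratch space are the same single free stack.

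The paper's proof avoids this by never emptying any occupied stack: all $w$ stacks stay full throughout, and objects are \emph{swapped} into place. To place $o$ with $\pi_G(o) = (1,d)$ currently at $(x,y)$, it rotates $o$ to the top of stack $x$ in $O(d)$ actions using the buffer plus one temporarily vacated slot of stack $1$ (move the object at $(1,1)$ to the buffer, move the $y-1$ blockers to the buffer, park $o$ at $(1,1)$, return the blockers to stack $x$, put $o$ on top of stack $x$, restore the object at $(1,1)$); it symmetrically rotates the current occupant $o'$ of $(1,d)$ to the top of stack $1$, swaps $o$ and $o'$ in three actions through the buffer, and then reverts both rotations. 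This in-place swap needs only the one guaranteed empty stack plus a single borrowed slot, so it survives $n = wd$, and it costs $O(d)$ per object, hence $O(d^2)$ per stack and $O(wd^2)$ overall. Your evacuate-then-fill argument does go through when there is slack --- e.g.\ $n \le (w-1)d$, where stack $s$'s contents can be dispersed into other stacks' free slots while the buffer stays empty --- but to prove the proposition for arbitrary \lsr you must replace it with a swap-in-place procedure of this kind.
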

\vspace{-0.15in}
\begin{proof}
Consider an \lsr with $n = wd$. Without loss of generality, assume
that: $\forall\ o \in \mathcal O: \pi_{I}^{1}(o) \leq w,
\pi_{G}^{1}(o) \leq w,$ i.e., stack $w + 1$ is empty at the start and
goal state.  It suffices to show that one stack (e.g., the first) can
be arranged in $O(d^2)$ actions and this can be repeated $w$ times.
The $O(d^2)$ cost for a stack is because each object can be moved to
its destination in $O(d)$ moves and this can be repeated for $d$
times. Consider the object $o$ to be moved at the bottom of the first
stack, i.e., $\pi_G(o) = (1, d)$. Without loss of generality, assume
that $\pi_I(o) = (x, y)$ with $x \neq 1$. Initially, $o$ will be moved
to the top of stack $x$. If $y = 1$, no action is needed. Otherwise,
perform the following moves per Fig.~\ref{fig:feasibility-1}: {\em
  (i)} move the object at $(1, 1)$ to the buffer stack $(w+1)$, {\em
  (ii)} move objects from $(x, 1)$ to $(x, y - 1)$ to the buffer, {\em
  (iii)} move $o$ to $(1, 1)$, {\em (iv)} move all objects in the
buffer except the last to stack $x$, {\em (v)} move $o$ to the top of
stack $x$, and {\em (vi)} move the last object in the buffer to stack
$1$.
\begin{figure}[ht]
    \centering
    \includegraphics[width=\linewidth]{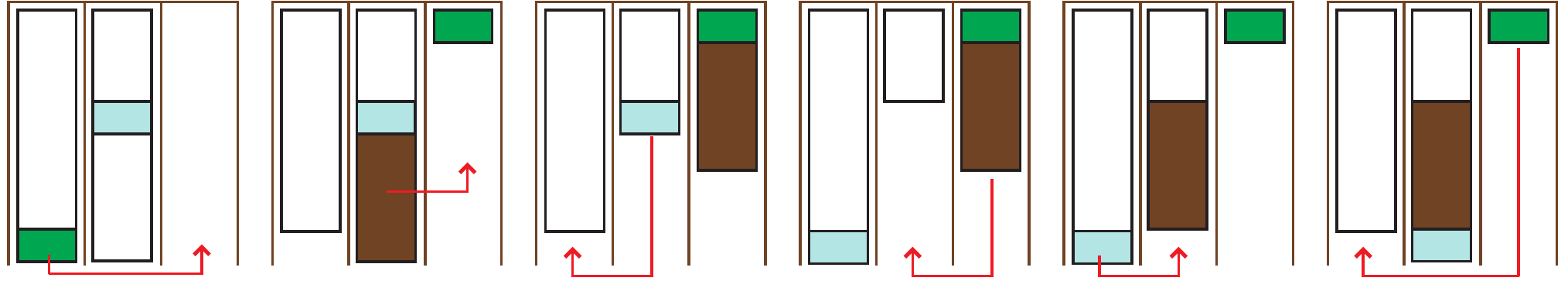}
    \caption{The cyan object moves to the top of its stack $x$
		with $O(d)$ actions.}
    \label{fig:feasibility-1}
\end{figure}

Using the same $O(d)$ procedure, the object $o'$ at $(1, d)$ can be
moved to $(1, 1)$. Using the buffer stack $(w+1)$, $o$ and $o'$ can be
swapped in three actions. Then, reverting the sequences, $o$ can be
moved to $(1, d)$ for an $O(d)$ total number of actions . So,
arranging a single stack needs $O(d^2)$ actions and the entire problem
takes $O(wd^2)$ actions.
\end{proof}
\vspace{-0.1in}

The running time is also bounded by $O(wd^2)$ since the only
computation cost is to go through $\pi_I$ and $\pi_G$ and recover the
solution sequence. For \glsr, which allows $n > wd$, an arbitrary
instance may not be feasible. This paper focuses on the optimal number
of actions for rearrangement problems, so \glsr is not considered
further.

\section{Fundamental Bounds on Optimality}\label{sec:bounds}

This section provides an analysis on the structure properties of \lsr,
focusing on the fundamental optimality bounds and polynomial time
algorithms for computing them.  The analysis assumes the hardest case
of \lsr where $n$ equals $wd$.  Without loss of generality, it is
assumed that stack $(w+1)$ is empty at the initial and goal state,
serving as a {\em buffer}. First, consider the lower bound on the
number of required actions.


\vspace{-0.05in}
\begin{proposition}\label{p:average-lb}In the average case, $\Omega(wd)$ 
actions are required for solving \lsr. 
\end{proposition}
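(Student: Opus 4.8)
The plan is to obtain the bound through a counting argument on the number of objects displaced between $\arrangement_I$ and $\arrangement_G$, averaged over a uniform choice of instances. The key structural observation is that a single \tpapa moves exactly one object (the top of some stack) to the top of another stack. Consequently, if an object $o$ satisfies $\arrangement_I(o) \neq \arrangement_G(o)$, then at least one action in any valid solution $A$ must relocate $o$. Since distinct displaced objects require distinct actions to be moved for the first time, this yields the deterministic lower bound
\[
|A| \;\geq\; \bigl|\{o \in \objects : \arrangement_I(o) \neq \arrangement_G(o)\}\bigr|.
\]

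Next I would fix the goal arrangement $\arrangement_G$ and treat $\arrangement_I$ as a uniformly random injective placement of the $n = wd$ objects into the $wd$ occupied slots (stack $w+1$ being empty at both ends). For each object $o_i$, the slot $\arrangement_I(o_i)$ is then uniform over all $wd$ slots, so $\Pr[\arrangement_I(o_i) = \arrangement_G(o_i)] = 1/(wd)$. Letting $X$ count the objects already in place, linearity of expectation gives
\[
\mathbb{E}[X] \;=\; \sum_{i=1}^{n} \Pr[\arrangement_I(o_i) = \arrangement_G(o_i)] \;=\; wd \cdot \frac{1}{wd} \;=\; 1,
\]
so the expected number of displaced objects is $wd - \mathbb{E}[X] = wd - 1$. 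Combining the two displays, the expected solution length over random instances is at least $wd - 1 = \Omega(wd)$, which is the claim.

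I expect the only delicate point to be the justification of the deterministic inequality: one must argue that moving each displaced object at least once cannot be amortized within a single action, which follows precisely because each \tpapa relocates exactly one object. Actions that move already-correctly-placed objects (for instance, to uncover buried ones) can only increase $|A|$ and hence never weaken the bound. Should a high-probability rather than in-expectation statement be preferred, a concentration argument on $X$ — whose indicator summands are negatively associated under the uniform random bijection — would upgrade the conclusion; but the expectation computation already suffices for the stated average-case bound.
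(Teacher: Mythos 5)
Your route is genuinely different from the paper's, but as written it contains a real gap in the deterministic step. You assert that $\arrangement_I(o) \neq \arrangement_G(o)$ forces at least one action to relocate $o$, ``because each \tpapa relocates exactly one object.'' That inference is invalid in this model, because positions are not static labels attached to objects: by the contiguity (``sliding'') assumption of the problem formulation, popping the top of a stack decreases the depth coordinate $\pi^2$ of every object remaining in that stack, and pushing increases it. So an object's position can change --- and can even become its goal position --- without the object ever being the argument of an action. Concretely, if stack $1$ holds $a$ at depth $1$ and $b$ at depth $2$, and the goal puts $b$ at $(1,1)$ and $a$ at $(2,1)$, a single action solves the instance although both objects are displaced, so the inequality $|A| \ge |\{o \in \objects : \arrangement_I(o) \neq \arrangement_G(o)\}|$ fails in general. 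In the setting of this proposition ($n = wd$, stack $w+1$ empty in both $\arrangement_I$ and $\arrangement_G$) the inequality does happen to hold, but for a reason you never supply: an object that is never moved keeps exactly the same set of objects below it (nothing below it can be popped before it is, and pushes only insert above it), and since its stack is full both initially and finally, its depth must then be the same at both ends. Without this invariance argument, or a substitute, your key step is unsupported --- and your stated justification would prove a false statement for $n < wd$.

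The fix is easy and keeps your argument more elementary than the paper's: count only objects whose \emph{stack} is wrong, i.e., use $|A| \ge |\{o \in \objects: \arrangement_I^1(o) \neq \arrangement_G^1(o)\}|$, which is airtight because stack membership can change only when the object itself is popped and pushed. Under your uniform model each object lands in its goal stack with probability $d/(wd) = 1/w$, so the expected count is $(1 - 1/w)\,wd = (w-1)d \ge wd/2$ for $w \ge 2$, again $\Omega(wd)$. For comparison, the paper argues differently: it charges to each of the $w$ stacks the cost of extracting one representative object that must leave that stack, which requires on the order of its depth (on average $d/2$) pops from that stack, and sums these disjoint charges over stacks. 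The paper's charging argument yields the same $\Omega(wd)$ while exploiting depth (the idea behind its worst-case variant, where each deepest object costs $d$ actions); your repaired version trades that structure for a one-line, unit-cost counting bound, which is a perfectly good and arguably cleaner way to get the average-case claim.
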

\vspace{-0.15in}
\begin{proof}

First consider a worst case scenario, i.e., that the deepest objects
$o_i \in \mathcal O$ in each stack $k$, i.e., $\pi_I(o_i) = (k, d)$,
must be moved to the next stack $k+1$ modulo $w$, i.e., $\pi_G(o_i) =
((k+1) \mod\ w, d)$. To move each of these objects, at least $d$
actions are needed because $d - 1$ objects are blocking the way to
them. Therefore, the total number of required actions is $\Omega(wd)$.

In the average case (assuming $\pi_I$ and $\pi_G$ are both uniformly
randomly generated), for each stack $k$, $1 \le k \le w$, look at an
object along the stack. Each object $o$ has probability $(w-1)/w$ to
have $\pi_I^{1}(o) \ne \pi_G^{1}(o)$.  That is, with probability
$1/w$, $o$ will stay in stack $k$ and with probability $(w-1)/w$ it
must be moved to a different stack. Because moving $o$ will require on
average $d/2$ actions, the expected cost of moving it is then
$(w-1)d/(2w)$. For all $w$ stacks, this is then $\Omega((w-1)wd/(2w))
= \Omega(wd)$.
\end{proof}
\vspace{-0.1in}

For the case of $w \ll d$, better average case lower bounds can be
found per the following lemma.

\vspace{-.05in}
\begin{lemma}\label{l:lb}The number of moves for solving \lsr is 
$\Omega(wd\log d/ \log w)$. 
\end{lemma}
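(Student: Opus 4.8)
The plan is to prove this via an information-theoretic counting argument, in the spirit of lower bounds for sorting and for pebble motion. The key observation is that the branching factor of the search -- the number of distinct \tpap actions available at any configuration -- is only $O(w^2)$ and, crucially, does not depend on $d$: an action $a_{ij}$ is specified entirely by a non-empty source stack $i$ and a non-full destination stack $j$, giving at most $(w+1)^2$ choices. Consequently, any solution sequence of length $L$ can be encoded as a string over an alphabet of size $(w+1)^2$, so the number of distinct sequences of length at most $L$ is at most $2\,((w+1)^2)^L$.

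Next I would count the targets that must be distinguished. Fix the initial arrangement $\arrangement_I$ to have the buffer stack $(w+1)$ empty and stacks $1,\dots,w$ completely full, and let $\arrangement_G$ range over all arrangements of this same form. There are exactly $(wd)!$ such goal arrangements, since they correspond to the bijections assigning the $wd$ labeled objects to the $wd$ occupied slots. Because \lsr is always feasible, every such $\arrangement_G$ is reachable from $\arrangement_I$; and since $\arrangement_I$ together with an action sequence determines the final configuration uniquely, distinct goals require distinct solution sequences.

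Combining the two counts, reaching all $(wd)!$ goals forces $2\,((w+1)^2)^L \ge (wd)!$ for the maximum required length $L$. Taking logarithms and applying Stirling's approximation, $\log((wd)!) = \Theta(wd\log(wd))$, yields $L = \Omega\big(wd\log(wd)/\log w\big)$, and since $\log(wd) \ge \log d$ this is $\Omega(wd\log d/\log w)$. To upgrade this worst-case statement to the average-case bound suggested by the surrounding discussion, I would use the standard Shannon-style argument: for a threshold $L_0$ with $2\,((w+1)^2)^{L_0} < (wd)!/2$, at most half of the $(wd)!$ goals are reachable within $L_0$ actions, so at least half require $\Omega(wd\log d/\log w)$ actions, whence the expected number of actions under a uniform goal is of the same order.

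The main point requiring care is the claim that the branching factor is independent of $d$; this is precisely what places $\log w$ rather than $\log(wd)$ in the denominator, and is the source of the improvement over Proposition~\ref{p:average-lb} in the regime $w \ll d$. I would also verify that the $(wd)!$ goals are genuinely all reachable (invoking the feasibility of \lsr established earlier) and that the encoding of sequences is injective, so that the counting inequality $2\,((w+1)^2)^L \ge (wd)!$ is valid.
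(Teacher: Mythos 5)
Your proposal is correct and follows essentially the same route as the paper's own proof: a counting argument comparing the number of distinct goal arrangements, $(wd)!$, against the at most $(w+1)^2$ choices per \tpap action, then applying Stirling's approximation to get $\Omega(wd\log d/\log w)$. Your added care about reachability of all goals, injectivity of the encoding, and the Shannon-style upgrade to an average-case bound are sensible refinements of the same argument rather than a different approach.
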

  \vspace{-.15in}
\begin{proof}
The bound is established by counting the possible \lsr problems for
fixed $w$ and $d$, i.e., for the case $n = wd$.  Given a fixed
$\pi_I$, there are $n!$ possible $\pi_G$, so there are at least $n! =
(wd)!$ different \lsr instances. With each action, one object at the
top of a stack ($w + 1$ of these) can be moved to any other stack ($w$
of these). Therefore, each action can create at most $w(w + 1) <
(w+1)^2$ new configurations. In order to solve all possible \lsr
instances, it must then be the case that the required number of moves,
defined as $\min \{|A|\}$, must satisfy $[(w + 1)^2]^{\min \{|A|\}}
\ge (wd)! $.  Then, by Sterling's approximation, $\min \{|A|\} = \Omega(wd
\frac{\log d}{\log w})$.
\end{proof}
\vspace{-0.05in}


Lemma~\ref{l:lb} implies Proposition~\ref{p:average-lb} as well but
does so in a less direct way. Interestingly, Lemma~\ref{l:lb}
immediately implies the following better lower bounds.

\vspace{-0.05in}
\begin{corollary}For an \lsr with $w = e^{\sqrt{\log d}}$, on average it 
requires $\Omega(wd \sqrt{\log d})$ actions to solve. 
\end{corollary}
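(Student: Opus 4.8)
The plan is to obtain this corollary as a direct specialization of Lemma~\ref{l:lb} rather than through any new structural argument. That lemma already establishes, via a counting argument over all $(wd)!$ possible \lsr instances, that the required number of moves is $\Omega(wd\log d/\log w)$ for arbitrary $w$ and $d$. The corollary is simply what this general expression becomes once $w$ is pinned to the particular growth rate $w = e^{\sqrt{\log d}}$, so the only real work is an algebraic substitution.

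First I would record the identity forced by the choice of $w$: taking logarithms of $w = e^{\sqrt{\log d}}$ gives $\log w = \sqrt{\log d}$. Substituting this into the denominator of the bound from Lemma~\ref{l:lb} yields
\[
\Omega\!\left(\frac{wd\log d}{\log w}\right) = \Omega\!\left(\frac{wd\log d}{\sqrt{\log d}}\right) = \Omega\!\left(wd\sqrt{\log d}\right),
\]
where the last step uses $\log d/\sqrt{\log d} = \sqrt{\log d}$. This is exactly the claimed bound, and the ``on average'' qualifier is inherited directly from Lemma~\ref{l:lb}, whose counting argument already bounds the moves needed to solve all instances and therefore also the average over instances.

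The remaining points are routine sanity checks rather than genuine obstacles. I would verify that the prescribed $w = e^{\sqrt{\log d}}$ is admissible: for all sufficiently large $d$ it exceeds the required minimum $w \ge 2$, and rounding to the nearest integer changes the bound only by constant factors absorbed into the $\Omega$. I would also note that this $w$ sits squarely in the $w \ll d$ regime flagged before Lemma~\ref{l:lb} --- indeed $w = e^{\sqrt{\log d}} = d^{1/\sqrt{\log d}} = o(d^{\epsilon})$ for every fixed $\epsilon > 0$ --- which is precisely why the logarithmic-factor refinement of the lemma is meaningful here and produces a bound strictly larger than the $\Omega(wd)$ of Proposition~\ref{p:average-lb}. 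The only thing to be careful about is keeping the logarithm base consistent throughout, since $w$, $\log w$, and $\log d$ must all be measured with the same base for the cancellation $\log d/\log w = \sqrt{\log d}$ to hold cleanly; any fixed base works and affects only the hidden constants.
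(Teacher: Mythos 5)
Your proposal is correct and matches the paper's own reasoning: the paper offers no explicit proof, stating only that Lemma~\ref{l:lb} ``immediately implies'' the corollary, which is exactly your substitution $\log w = \sqrt{\log d}$ into the $\Omega(wd\log d/\log w)$ bound. Your added sanity checks (admissibility of $w$, the $w \ll d$ regime, base consistency) are harmless elaborations of the same argument.
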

\vspace{-0.15in}

\begin{corollary}\label{c:const-w-lower}For an \lsr with $w$ being a 
constant, on average it requires $\Omega(d \log d)$ actions to solve. 
\end{corollary}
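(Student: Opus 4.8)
The plan is to derive this corollary directly from Lemma~\ref{l:lb}, which already establishes the general lower bound $\Omega(wd \log d / \log w)$ on the number of moves required to solve \lsr. The statement here is simply the specialization of that bound to the regime where $w$ is a constant, so the proof is essentially a substitution followed by simplification of the asymptotic expression.

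First I would recall the bound from Lemma~\ref{l:lb}: the number of moves is $\Omega\!\left(wd \frac{\log d}{\log w}\right)$. Then I would substitute the assumption that $w$ is a constant. Under this assumption, both $w$ and $\log w$ are constants (and since $w \geq 2$, we have $\log w > 0$, so dividing by $\log w$ is harmless and only changes the hidden constant). Absorbing these constant factors into the $\Omega(\cdot)$ notation, the expression $wd \frac{\log d}{\log w}$ collapses to $\Omega(d \log d)$, which is precisely the claimed bound.

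The only point that requires a moment of care is the treatment of the constant factors inside the asymptotic notation, and confirming that $w$ being constant forces both $w$ and $1/\log w$ to be bounded away from zero and infinity — this is immediate from $w \geq 2$. I do not anticipate any genuine obstacle here, since the corollary is an immediate consequence of the lemma with $w$ held fixed; the substance of the argument lives entirely in Lemma~\ref{l:lb}, and this corollary merely reads off a clean closed form in the constant-$w$ regime.
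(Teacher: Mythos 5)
Your proposal is correct and matches the paper's own reasoning exactly: the paper states this corollary as an immediate consequence of Lemma~\ref{l:lb}, obtained by fixing $w$ (hence $\log w$) as a positive constant in the bound $\Omega(wd\log d/\log w)$ and absorbing those constants into the asymptotic notation. Nothing further is needed.
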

\vspace{-0.05in}

The focus now shifts towards upper bounds on optimality where
polynomial time algorithms are presented for computing them.  Recall
that a trivial upper bound of $O(wd^2)$ is given by
Proposition~\ref{p:trivial-ub}.  Comparing the $O(wd^2)$ upper bound
with the lower bound, which ranges between $\Omega(wd)$ and
$\Omega(d\log d)$ (for constant $w$), there remains a sizable gap.
Forthcoming algorithms illustrate how to significantly reduce, and in
certain cases eliminate this gap.


\vspace{-0.05in}
\begin{lemma}\label{l:column-sorting}An arbitrary instance of \clsr can be 
solved using $O(wd\log w)$ actions. 
\end{lemma}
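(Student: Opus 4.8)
The plan is to solve \clsr by divide-and-conquer on the set of goal columns, in the spirit of quicksort. Since $n = wd$ and the goal specifies a column for every object, exactly $d$ objects are destined for each of the $w$ columns. Partition the columns into a ``left'' half $L = \{1,\dots,\lceil w/2\rceil\}$ and a ``right'' half $R$, and call an object an $L$-object or an $R$-object according to which half its goal column lies in. Then there are exactly $|L|\,d$ $L$-objects and $|R|\,d$ $R$-objects, which match the capacities of the left and right physical stacks. The core step, which I will call the \emph{split}, rearranges the block so that every $L$-object ends up somewhere in the left $\lceil w/2\rceil$ stacks and every $R$-object in the right $\lfloor w/2\rfloor$ stacks, leaving the buffer empty; afterwards I would recurse on each half independently.

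A key point is that the single buffer stack suffices for the entire recursion. After a split, the left region is itself a full block of $\lceil w/2\rceil$ stacks together with the now-empty buffer, i.e., a smaller instance of exactly the same form; I first recurse on it, which returns the buffer empty, and only then recurse on the right region, reusing the same buffer. Because the two recursive calls are sequential, they never contend for buffer space. If $S(w)$ denotes the number of \tpap actions used on a $w$-stack block and the split costs $O(wd)$, then
\[
S(w) = O(wd) + 2\,S\!\left(w/2\right), \qquad S(1) = 0,
\]
which unrolls to $O(wd\log w)$ over the $\log w$ levels, with base case $w = 1$ already sorted (all its objects share the one available column). Thus the whole argument reduces to proving that the split runs in $O(wd)$ actions.

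For the split I would exploit the freedom specific to \clsr: within-column depth is unconstrained, so the order of objects inside a stack is irrelevant and only the $L/R$ label of each object matters. The number of $R$-objects currently lying in left stacks equals the number of $L$-objects lying in right stacks; call this $m \le wd/2$. The basic primitive is a three-action swap through the buffer: pop a surfaced $R$-object off a left stack into the buffer, pop a surfaced $L$-object off a right stack onto that (now non-full) left stack, then move the buffered $R$-object onto the (now non-full) right stack. Each swap fixes one misplaced $R$-object and one misplaced $L$-object in three actions, so $m$ swaps, costing $3m = O(wd)$ actions, would complete the split if the objects to be exchanged were always on top.

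The main obstacle is blocking: a misplaced object may be buried under correctly-labeled ones that must first be moved aside. The crux of the proof is to schedule these exposures so that the total overhead is still $O(wd)$, equivalently so that every object is moved only $O(1)$ times. I would charge blocking work to the stacks, clearing each stack into the buffer at most a constant number of times and using an emptied stack together with the buffer as temporary space, while interleaving the clearing of a left stack with that of a right stack so that room is freed on both sides at once. This interleaving is what avoids the ``target region is full'' deadlock, since the buffer alone provides only $d$ units of slack against a fully packed block. Verifying that the schedule never stalls and returns the buffer empty, so the recursion can reuse it, is the delicate part; feasibility itself is already guaranteed by Proposition~\ref{p:sor-to-pmt}, so only the $O(1)$-moves-per-object accounting requires care. (For general $w$ the halves are taken of sizes $\lceil w/2\rceil$ and $\lfloor w/2\rfloor$, which changes none of the asymptotics.)
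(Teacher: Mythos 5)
Your outer skeleton---partition the goal columns into halves, perform an $O(wd)$ ``split'' so that $L$-objects occupy the left stacks and $R$-objects the right ones, return the buffer empty, and recurse sequentially on the two halves via $S(w)=O(wd)+2S(w/2)$---is exactly the paper's divide-and-conquer structure, and your accounting of the recursion (including why one buffer suffices for both sequential recursive calls) is correct. The genuine gap is that you never exhibit a procedure achieving the split in $O(wd)$ actions, and the sketch you substitute for one does not survive scrutiny. Your three-action swap requires a misplaced $R$-object surfaced on a left stack \emph{and} a misplaced $L$-object surfaced on a right stack at the same moment; nothing guarantees such a pair exists, since every misplaced object may be buried on both sides. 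Your fallback of ``clearing each stack into the buffer'' and then ``using an emptied stack together with the buffer as temporary space'' does not fit the capacities: the buffer has exactly $d$ slots, so clearing one full stack saturates it completely, leaving you an empty stack \emph{or} a free buffer but never both. The interleaved schedule that is supposed to avoid deadlock is precisely the assertion that needs proof---with $n=wd$ the entire system has only $d$ slack---and you flag it yourself as unverified. Since the recursion is routine, this unproven split is not a detail; it is the whole content of the lemma.

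For comparison, the paper fills this hole with a concrete three-phase procedure whose moves always fit by construction, so no scheduling or non-stalling argument is ever needed. Phase 1: each stack is internally reorganized into two contiguous blocks (an $L$-block and an $R$-block) in $O(d)$ actions, by first moving the top $\ell$ objects of some other occupied stack into the buffer so that the free space is distributed over the buffer and that helper stack; the same trick can reverse the order of the two blocks. Phase 2: mixed stacks are consolidated pairwise: for stacks $i,j$ carrying $\ell_i,\ell_j$ objects of the $L$-set, if $\ell_i+\ell_j\ge d$ the $R$-block of stack $i$ (size $d-\ell_i\le\ell_j\le d$) passes through the buffer while $L$-objects from stack $j$ fill stack $i$; if $\ell_i+\ell_j<d$ one first reverses stack $i$'s blocks and runs the symmetric routine. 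Each consolidation costs $O(d)$ and makes at least one stack pure, so at most $w$ occur, and a counting argument shows the last mixed stack is automatically pure. Phase 3: whole stacks are swapped through the buffer in $O(d)$ each so the pure-$L$ stacks occupy positions $1,\dots,\lceil w/2\rceil$. All phases together give the $O(wd)$ per-level bound your recursion needs. Note also that the paper charges cost per stack operation (at most $w$ operations of cost $O(d)$ per level) rather than proving your stronger invariant that every object moves $O(1)$ times per level---an invariant the paper's own procedure does not satisfy, since an object in a stack that stays mixed can be handled by several consolidations within one level.
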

\vspace{-0.15in}
\begin{proof}
The $\log w$ factor in $O(wd \log w)$ comes from a divide-and-conquer
approach. As such, a recursive algorithm is outlined for solving
\clsr. In the first iteration, partition all $wd$ objects into two
sets based on $\pi_G^{1}$. For an object $o \in \mathcal O$, if
$\pi_G^{1}(o) \le \lceil w/2 \rceil$, then it is assigned to the {\em
  left} set. Otherwise it is assigned to {\em right} set. The goal of
the first iteration is to sort objects so that the left set resides in
stacks $1$ to $\lceil w/2 \rceil$, as illustrated in
Fig.~\ref{fig:left-right}.

\begin{figure}[th]
    \centering
    \includegraphics[width=.8\linewidth]{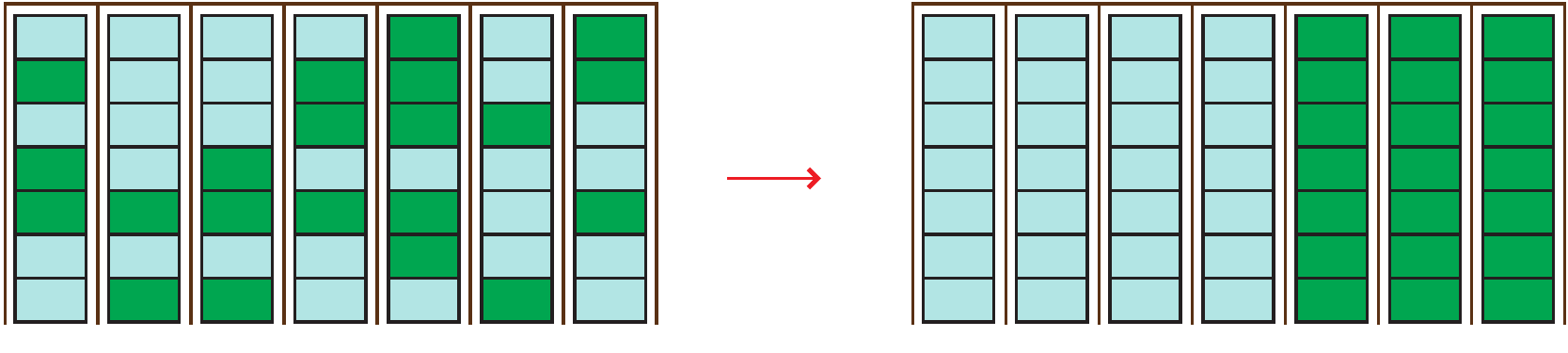}
    \caption{The goal of the first iteration in solving an \clsr instance
		with $w = 7$ and $d = 7$. The empty stack is not drawn.}
    \label{fig:left-right}
\end{figure}

In the first iteration, begin with the first stack and sort it into two
contiguous sections belonging to the left set and the right set.  The process
involves using another occupied stack and the buffer stack (using $O(d)$ moves). 
Note that the content of the other occupied stack is irrelevant. These three 
stacks are illustrated in the first figure in Fig.~\ref{fig:sorc-1}. Assume 
that $\ell$ objects of stack $1$ belong to the left set (in the example, 
$\ell = 4$). To begin, the top $\ell$ objects of the last stack is moved to 
the buffer. This allows the sorting of the first stack into two contiguous 
blocks of left only and right only objects, which can then be returned to the 
first stack. Note that the order of the two blocks can be reversed using the 
same procedure; this will be used shortly. The procedure is then applied to 
all stacks. The procedure and the end result are illustrated in 
Fig.~\ref{fig:sorc-1}. It is clear that the total actions required is $O(wd)$. 

\vspace{0.05in}
\begin{figure}[ht]
    \centering
    \includegraphics[width=\linewidth]{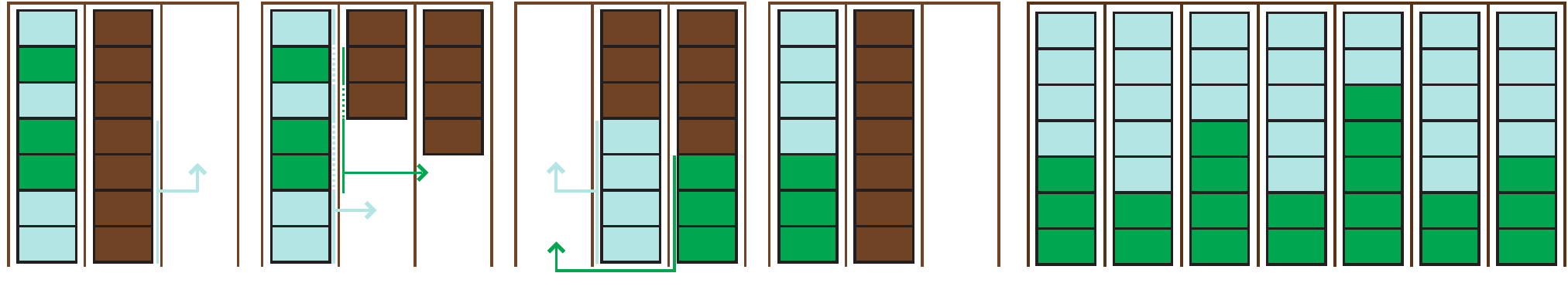}
    \caption{The left four figures illustrate the process of sorting a 
		single stack into two contiguous blocks. The last figure is the 
		end result of applying the procedure to all stacks.}
    \label{fig:sorc-1}
\end{figure}

The next step involves the first two stacks and attempts to {\em consolidate} 
the sets. If any stack is already fully occupied by either the left or
the right set, then that stack can be skipped; suppose not. Let these 
two stacks be $i$-th and $j$-th stacks and let $\ell_i$ and $\ell_j$ 
be the number of objects belonging to the left set in the $i$-th and 
$j$-th stacks, respectively. If $\ell_i + \ell_j \ge d$, then using the
buffer stack, stack $i$ can be forced to contain only objects belonging to 
the left set. Fig.~\ref{fig:sorc-2} illustrates applying the procedure 
to the left most two stacks to the running example and the result. 

\vspace{0.05in}
\begin{figure}[ht]
    \centering
    \includegraphics[width=\linewidth]{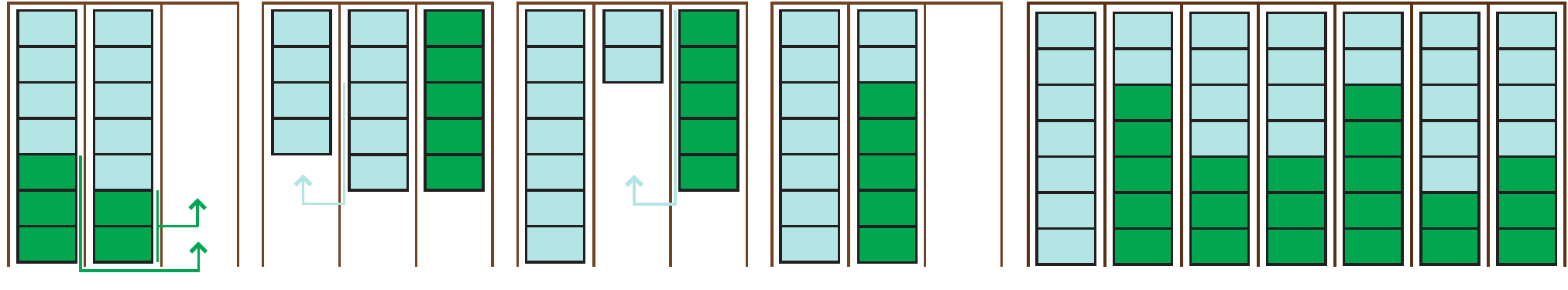}
    \caption{Consolidating the first two stacks.}
    \label{fig:sorc-2}
\end{figure}

If $\ell_i + \ell_j < d$, then stack $i$ is processed so that the
$\ell_i$ objects belonging to the left set are on the top (using the
block reverse procedure mentioned earlier in this proof). Then, a
similar consolidation routine can be applied. Fig.~\ref{fig:sorc-3}
illustrates the application of the procedure to stacks $2$ and $3$ of
the right most figure of Fig.~\ref{fig:sorc-2}. With these two
variations, all stacks can be sorted so that each stack contains only
objects from either the left set or the right set.

\vspace{0.05in}
\begin{figure}[ht]
    \centering
    \includegraphics[width=\linewidth]{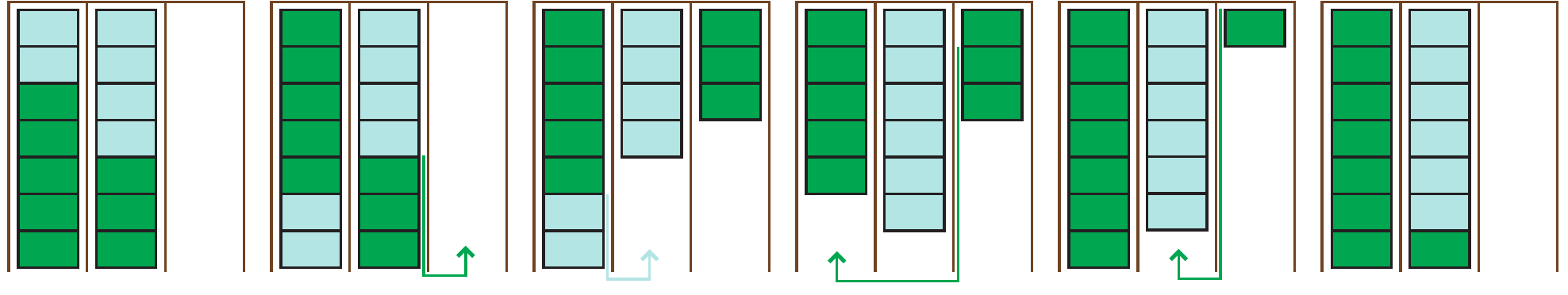}
    \caption{Consolidating the first two stacks that requires reversing
		the two contiguous blocks one of the two stacks.}
    \label{fig:sorc-3}
\end{figure}

At this point, using the buffer stack, entire stacks can be readily swapped to
complete the first iteration. The total number of actions used is $O(wd)$ per
iteration. Applying the same iterative procedure to the left and the right sets
of objects, the full \clsr problem can then be solved with $O(wd\log w)$
actions.
\end{proof}
\vspace{-0.1in}

After solving \clsr, each stack needs to be resorted to fully solve 
the original \lsr problem, which can be performed using $O(d\log d)$ actions.

\vspace{-0.05in}
\begin{lemma}\label{l:small-w}After solving the \clsr portion of an 
\lsr instance, a stack can be fully sorted using another stack and
the buffer stack with $O(d\log d)$ actions. 
\end{lemma}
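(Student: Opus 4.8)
The plan is to sort a single stack by divide-and-conquer on the target depth coordinate $\pi_G^{2}$, exactly mirroring how Lemma~\ref{l:column-sorting} used divide-and-conquer on the target stack coordinate $\pi_G^{1}$: there the recursion depth over the $w$ stacks produced the $\log w$ factor, and here the recursion depth over the $d$ depth-positions should produce the $\log d$ factor. After \clsr has been solved, the stack to be sorted holds exactly the $d$ objects destined for it, so the task is purely to realize the permutation that carries each object from its current depth to its goal depth $\pi_G^{2}$. I would treat this as sorting $d$ keyed items, the key of an object being its goal depth in $\{1,\dots,d\}$, using the source stack together with one other stack and the empty buffer stack as scratch.

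First I would reuse the two primitives already available from the proof of Lemma~\ref{l:column-sorting}: (i) a block \emph{split}, which rearranges a stack of $m \le d$ items into two contiguous blocks according to a binary predicate on the key, and (ii) a block \emph{reverse}, which flips the order of a contiguous block; each costs $O(m)$ \tpap actions and uses only the buffer and one additional stack. With these, one level of the recursion splits the $d$ items by the test ``$\pi_G^{2} \le \lceil d/2 \rceil$'', i.e., whether an item belongs to the top half or the bottom half of the finished stack, producing two blocks of size at most $\lceil d/2 \rceil$ in $O(d)$ actions. Recursing on each block (with the depth range halved at each step) sorts the two halves, which are then concatenated in the correct order, using a reverse to fix orientation when needed. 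Since each of the $O(\log d)$ levels touches every item a constant number of times, the per-level cost is $O(d)$ and the total is $O(d\log d)$ actions. Equivalently, one may obtain the same bound through a bottom-up merge view: repeatedly merge adjacent sorted runs by comparing the two exposed tops and applying a single \tpap action to the appropriate item, which merges two runs of total length $m$ in $O(m)$ actions, and $O(\log d)$ doubling passes of $O(d)$ work each give the bound.

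The main obstacle I expect is carrying out this recursion inside only three stacks, each of capacity $d$, without exceeding capacity and without letting orientation fixes inflate the per-level cost beyond $O(d)$. Unlike Lemma~\ref{l:column-sorting}, whose sub-problems lived in physically distinct stacks, here every sub-block shares the single source stack, so I must isolate each block on the buffer or the auxiliary stack before recursing and track which end of each sorted run is currently exposed. The key accounting point to verify is that each split, reverse, and re-concatenation at a given level moves every item only a constant number of times, so the reversals needed to reconcile run orientations are absorbed into that level's $O(d)$ budget; combined with the $O(\log d)$ depth this yields $O(d\log d)$. Feasibility of every intermediate step is guaranteed by the presence of the empty buffer, exactly as in Proposition~\ref{p:trivial-ub}.
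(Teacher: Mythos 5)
Your proposal takes essentially the same route as the paper: the paper's proof is exactly a divide-and-conquer on the goal depth $\pi_G^{2}$, splitting the stack into top-half and bottom-half blocks in $O(d)$ actions per level and recursing for $\log d$ levels, where the scratch space you describe is obtained by moving half of the auxiliary stack $j$ into the buffer so as to create two half-size buffers (the same device underlying the split primitive of Lemma~\ref{l:column-sorting} that you invoke). The capacity and orientation bookkeeping you flag as the main obstacle is resolved by precisely this two-half-buffer construction, so your plan is correct and coincides with the paper's argument.
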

\vspace{-0.15in}
\begin{proof}The sorting is done recursively. Suppose stack 
$i$ is to be sorted using stack $j$ and the buffer stack. Assume without loss of generality 
that $d = 2^k$ for some $k$. To start, move half of the objects in stack 
$j$ to the buffer stack. This creates two buffers of size $2^{k-1}$. Using
these two buffers, stack $i$ can be sorted into a top half and a bottom half. 
As these two halves are restored to stack $i$, the top and bottom halves are 
separated. Iteratively applying the same procedure can then sort stack $i$ 
fully in $\log d$ iterations. The total number of required actions is then
$O(d\log d)$. Fig.~\ref{fig:sor} provides an illustrative example sorting 
sequence for $k =3$. 
\end{proof}

\begin{figure}[ht]
    \centering
    \begin{overpic}[width=\linewidth]{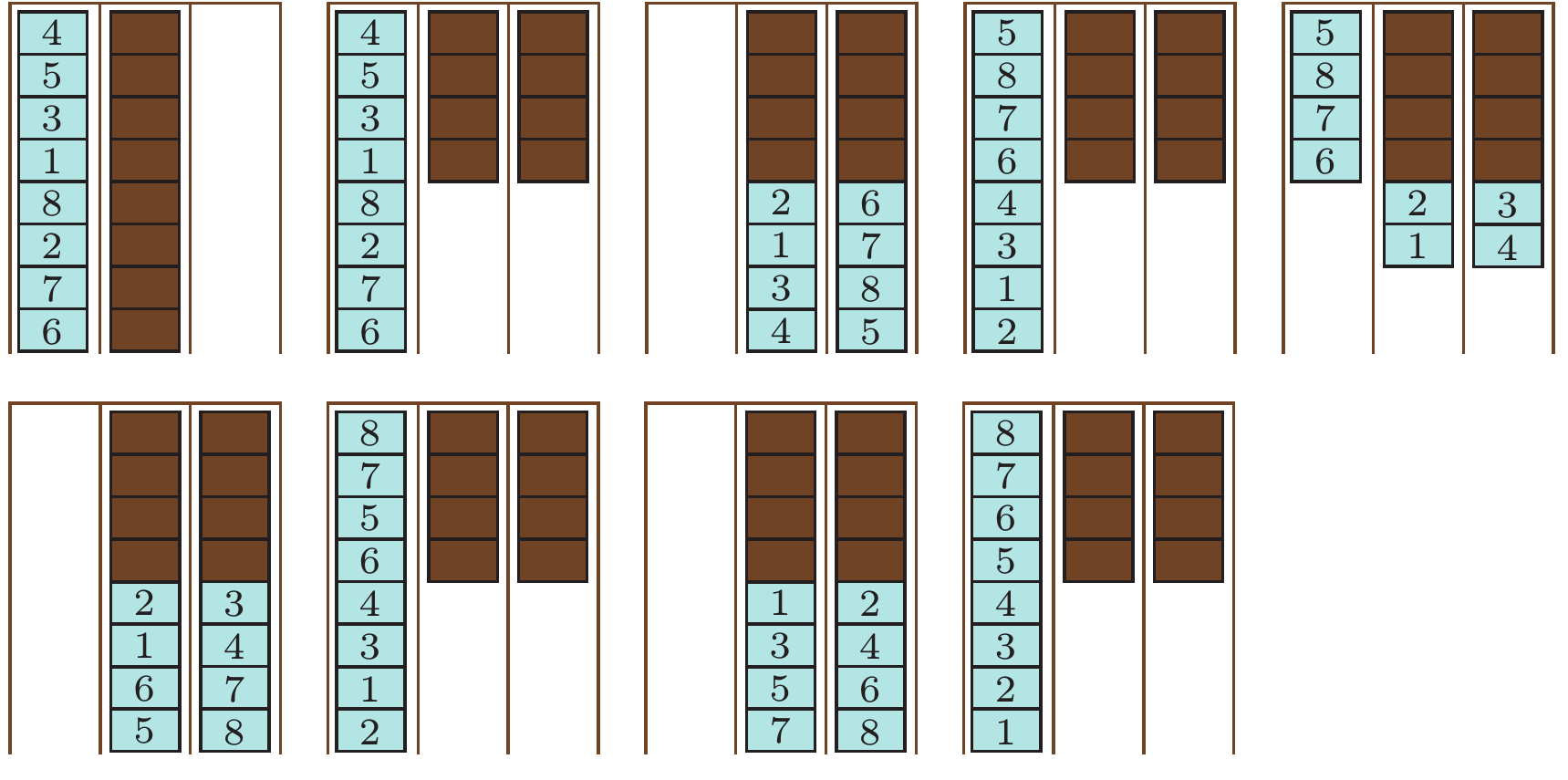}
        \put(7.5,24.75){\footnotesize (a)}
        \put(27.75,24.75){\footnotesize (b)}
        \put(48.25,24.75){\footnotesize (c)}
        \put(68.25,24.75){\footnotesize (d)}
        \put(88.75,24.75){\footnotesize (e)}
        \put(7.5,-1){\footnotesize (f)}
        \put(27.75,-1){\footnotesize (g)}
        \put(48,-1){\footnotesize (h)}
        \put(68.5,-1){\footnotesize (i)}
    \end{overpic}
    \vspace{0.01in}
    \caption{An example of sorting a stack with $d = 2^3$.}
    \label{fig:sor}
\end{figure}

Lemma~\ref{l:column-sorting} and Lemma~\ref{l:small-w} suggest that the 
gap between the lower and upper bounds for \lsr can be completely 
eliminated for constant $w$. 

\vspace{-0.05in}
\begin{theorem}%
    For constant $w$, \lsr can be solved using $O(d\log d)$ 
    actions, agreeing with the $\Omega(d \log d)$ lower bound. 
\end{theorem}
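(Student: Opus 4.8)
The plan is to compose the two procedures established in Lemma~\ref{l:column-sorting} and Lemma~\ref{l:small-w}, and then observe that the constant-$w$ assumption collapses every factor of $w$ or $\log w$ into a constant. First I would run the \clsr-solving routine of Lemma~\ref{l:column-sorting} on the given \lsr instance. This leaves every object in its correct goal stack $\pi_G^1$, though not necessarily at its correct depth $\pi_G^2$. The cost of this phase is $O(wd\log w)$ \tpap actions; since $w$ is a constant, both $w$ and $\log w$ are $O(1)$, so this phase contributes only $O(d)$ actions.

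Second, with the column assignment now correct, each of the $w$ occupied stacks must be permuted internally to realize the target depths. I would apply the per-stack sorting routine of Lemma~\ref{l:small-w} to each stack in turn, using some other occupied stack together with the buffer stack as scratch space. Each such sort costs $O(d\log d)$ actions, and there are $w$ of them, for a total of $O(wd\log d)$; under constant $w$ this is $O(d\log d)$. Adding the two phases yields $O(d) + O(d\log d) = O(d\log d)$ actions overall, which is the claimed upper bound.

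Finally, to confirm optimality up to constants, I would invoke the lower bound of Corollary~\ref{c:const-w-lower}, which states that for constant $w$ the average case already requires $\Omega(d\log d)$ actions. Hence the $O(d\log d)$ upper bound meets the $\Omega(d\log d)$ lower bound and the gap is eliminated, establishing the theorem.

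The only point requiring genuine care --- and the step I expect to be the main obstacle --- is ensuring that the scratch space demanded by Lemma~\ref{l:small-w} is always available as the stacks are sorted one at a time. That lemma needs a \emph{second} occupied stack (from which to borrow half the contents) in addition to the empty buffer. I would argue that since $w \ge 2$ and $n = wd$, there is always at least one other occupied stack to serve this role; when sorting the final stack, an already-sorted stack may be borrowed and then restored to its sorted state, and the buffer must be returned to empty after each individual sort so that it is free for the next. Checking that these invariants (buffer empty before each sort, borrowed stack restored afterward) hold is the crux, but it follows directly from the reversible, buffer-returning structure of the routine in Lemma~\ref{l:small-w}.
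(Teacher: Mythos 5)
Your proof is correct and follows essentially the same route as the paper: apply Lemma~\ref{l:column-sorting} to solve the \clsr portion in $O(wd\log w) = O(d)$ actions for constant $w$, then sort each stack via Lemma~\ref{l:small-w} for $O(wd\log d) = O(d\log d)$ actions, and invoke Corollary~\ref{c:const-w-lower} for the matching lower bound. Your additional verification that the buffer and a second occupied stack remain available for each per-stack sort is a detail the paper leaves implicit, but it does not change the argument.
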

\vspace{-0.15in}
\begin{proof}
    By Lemma~\ref{l:column-sorting}, when $w$ is a constant, a corresponding 
    \clsr problem can be solved using $O(wd\log w) = O(d)$ actions. 
    Lemma~\ref{l:small-w} then applies to do the final sorting in $O(wd\log d) 
    = O(d\log d)$ actions. The total number of required actions is then $O(d\log d)$. 
    The lower bound is given by Corollary~\ref{c:const-w-lower}.
\end{proof}
\vspace{-0.1in}

The following provides a tighter upper bound for non-constant $w$. 
\begin{theorem}%
    An arbitrary \lsr instance can be solved using 
    $O(wd \max\{\log d, \log w\})$ actions. 
\end{theorem}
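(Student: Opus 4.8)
The plan is to solve the instance in two sequential phases and add up their costs, mirroring the constant-$w$ theorem but keeping the $\log w$ factor explicit rather than collapsing it. The first phase drives every object into its correct stack, and the second phase repairs the ordering within each stack.

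First I would treat the given \lsr instance as a \clsr instance by discarding the goal depth information $\pi_G^2$, and solve it with the divide-and-conquer routine of Lemma~\ref{l:column-sorting}. This costs $O(wd\log w)$ \tpap actions and leaves an arrangement in which every object already sits in its goal stack; the only remaining discrepancy with $\pi_G$ is the intra-stack ordering.

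Next I would sort the $w$ stacks one at a time. By Lemma~\ref{l:small-w}, a single stack can be sorted to its goal depth-ordering with $O(d\log d)$ actions, using one other occupied stack together with the buffer as scratch space; since $w\ge 2$, such an auxiliary stack always exists whenever a given stack is being sorted. Repeating this over all $w$ stacks therefore costs $w\cdot O(d\log d)=O(wd\log d)$ actions. Summing the two phases yields $O(wd\log w)+O(wd\log d)=O\bigl(wd(\log w+\log d)\bigr)$, and because $\log w+\log d=\Theta(\max\{\log w,\log d\})$, this equals $O(wd\max\{\log d,\log w\})$, as claimed.

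The hard part will be verifying that the second phase genuinely composes, i.e., that sorting stack $i$ with the help of an auxiliary stack does not corrupt the stacks that still await their turn. I would address this by observing that the procedure of Lemma~\ref{l:small-w} uses the auxiliary stack and the buffer only as temporary storage and returns both to their prior contents upon completion, so this restoration is already subsumed in the $O(d\log d)$ bound. Consequently the $w$ stacks can be processed one after another in any order, each beginning from an empty buffer and an undisturbed auxiliary stack, and the per-stack bound applies unchanged throughout.
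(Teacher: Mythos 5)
Your proposal is correct and follows essentially the same route as the paper's proof: first solve the induced \clsr instance via Lemma~\ref{l:column-sorting} at cost $O(wd\log w)$, then sort each stack via Lemma~\ref{l:small-w} at cost $O(d\log d)$ apiece, and combine to get $O(wd\max\{\log d,\log w\})$. Your added verification that the auxiliary stack and buffer are restored after each per-stack sort (so the phases compose cleanly) is a detail the paper leaves implicit, but it does not change the argument.
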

\begin{proof}
    \hbadness=10000
    Applying Lemma~\ref{l:column-sorting} to the \lsr problem yields a  
    stack sorted instance of \lsr using $O(wd \log w)$ actions. Since sorting
    each of the stacks afterward takes $O(d\log d)$ time, the full 
    \lsr problem can be solved using $O(wd \max \{\log d, \log w\})$ actions. 
\end{proof}

\hbadness=10001
The greatly improved and general upper bound 
is now fairly close to the general lower bound $\Omega(wd)$ within only a 
logarithmic factor.%

\begin{algorithm}
\begin{small}
	\DontPrintSemicolon
	Poly-C-LSR$(1, w)$\;
    \lFor{$c_i \in [1, w]$}{Sort stack $c_i$}\label{alg:lsr-sort}
    \Return\;

    \SetKwProg{Fn}{Function}{}{}
    \Fn{\normalfont~Poly-C-LSR$(l, r)$}{\label{alg:clsr-begin}
        \lIf{\normalfont~$l = r$}{\Return}
        $i \gets l, j \gets r, m \gets \lfloor (l + r) / 2 \rfloor$\;
        \For{$o_i \in O$}{
            \lIf{$l \leq \pi^1_G(o_i) \leq m$}{label $o_i$: {\em left}}
            \lElseIf{$m + 1 \leq \pi^1_G(o_i) \leq r$}{label $o_i$: {\em right}}
        }
        \lFor{$c_i \in [l, r]$}{Reorder stack $c_i$}
        \lFor{$c_i \in [l, r]$}{Consolidate stack $c_i$}
        Poly-C-LSR$(l, m)$\;\label{alg:clsr-reorder}
        Poly-C-LSR$(m + 1, r)$\;\label{alg:clsr-consolidate}
        \Return\;
    }\label{alg:clsr-end}
	\caption{Poly-LSR$(\arrangement_I, \arrangement_G)$}
	\label{alg:lsr}
\end{small}
\end{algorithm}

The algorithmic process (Poly-LSR) for the method introduced above is shown
in Alg.~\ref{alg:lsr}.  It contains the subroutine for solving \clsr\
(Poly-C-LSR, in line~\ref{alg:clsr-begin}-\ref{alg:clsr-end}), which
iteratively separates the left and right objects in each stack
(line~\ref{alg:clsr-reorder}), and consolidate the stacks
(line~\ref{alg:clsr-consolidate}).  Details are already mentioned in
Lemma~\ref{l:column-sorting}.  \lsr\ is solved by first calling Poly-C-LSR and
then sorting all the stacks using the routine in Lemma~\ref{l:small-w}
(line~\ref{alg:lsr-sort}).  The overall time complexity is $O(wd~\max \{\log~d,
\log~w\})$, which is equivalent to the number of steps in the generated solution.
\section{Optimal and Sub-optimal \lsr Solvers}\label{sec:optimal}


\lsr~can be reduced to a {\em Shortest Path Problem}, which searches for a
minimum weight path between two nodes in an undirected graph.
Here a node simply denotes an arrangement $\pi$. The neighbors of this
node are all the arrangements reachable from $\pi$ via a single \tpap
action. The edge weights between connected nodes are uniform.

The A* search algorithm\cite{HarNilRap68} is a common tool for solving such a
problem optimally. The branching factor is $(w + 1)w$ since a \tpap
action picks an object from one of $(w + 1)$ stacks, and places it in one of
the other $w$ stacks.  Several heuristic functions are designed to guide the
search:

{\bf Depth Based Heuristic (DBH).}
This heuristic returns an admissible number of \tpapa{}s 
needed to move $o_i \in \objects$ to its goal.
The detailed process appears in Alg.~\ref{alg:dbh}.
It initially checks if $o_i$ is at its goal location (line~\ref{alg:dbh-solved}),
and simply returns $0$ if this statement is true.
Lines~\ref{alg:dbh-count1} and \ref{alg:dbh-count2} calculate 
the number of objects in front of $o_i$ in $\pi_C$ (resp.~$\pi_G$), 
and denote it as $n_c$ (resp.~$n_g$).
At this point (line~\ref{alg:dbh-same-column}), if the object is currently in its goal stack,
DBH computes the estimated number of moves via the following process:
(1) take $o_i$ out of $\pi_G(o_i)$,
(2) make the goal pose reachable by inserting/removing intermediary objects from $\pi_G(o_i)$, and
(3) place $o_i$ back into $\pi_G(o_i)$.
If the object is not in its goal stack, then one of the following apply:
If $\pi_G(o_i)$ is reachable from $\pi_C(o_i)$ solely by removing intermediary
objects in front of both locations, then the object can be moved in $n_c + n_g
+ 1$ steps;
Otherwise, $o_i$ needs to be moved to an intermediate stack and this induces
an extra move.

\begin{algorithm}
\begin{small}
	\DontPrintSemicolon
    \lIf{$\pi_C(o_i) = \pi_G(o_i)$}{\Return~0\label{alg:dbh-solved}}
    \mbox{$n_c \gets |\{o_j | o_j \in \objects, 
    \pi_C^{1}(o_j) = \pi_C^{1}(o_i),
    \pi_C^{2}(o_j) < \pi_C^{2}(o_i)\}|$}\;\label{alg:dbh-count1}
    \mbox{$n_g \gets |\{o_j | o_j \in \objects, 
    \pi_G^{1}(o_j) = \pi_G^{1}(o_i),
    \pi_G^{2}(o_j) < \pi_G^{2}(o_i)\}|$}\;\label{alg:dbh-count2}
    \If{$\pi_C^{1}(o_i) = \pi_G^{1}(o_i)$\label{alg:dbh-same-column}}{
        \lIf{$n_c > n_g$}{\Return~$2n_c - n_g + 2$}
        \lElse{\Return~$n_g + 2$}
    }
    \Else{
        \If{$\pi_C^{2}(o_i) + \pi_G^{2}(o_i) - 1 \leq 
        (w + 1)d - n$}{\Return~$n_c + n_g + 1$}
        \lElse{\Return~$n_c + n_g + 2$}
    }
	\caption{DBH($o_i, \pi_C, \pi_G$)}
	\label{alg:dbh}
\end{small}
\end{algorithm}

The following variants of DBH deal with multiple objects:
\begin{enumerate}
    \item DBH1: admissible, takes the maximum DBH value over all objects:
    $h_{\text{DBH1}} = \max\nolimits_{o \in \objects} \text{DBH}(o, \pi_C, \pi_G)$.
    \item DBHn: inadmissible, takes the summation of DBH values:
    $h_{\text{DBHn}} = \sum\nolimits_{o \in \objects} \text{DBH}(o, \pi_C, \pi_G)$.
\end{enumerate}

{\bf Column Based Heuristic (CBH).}
Described in Alg.~\ref{alg:cbh},
CBH counts the summation of the minimum number of actions 
necessary to move each object to its goal stack.
As opposed to DBH which seeks a tight estimate for a single object,
CBH considers all objects.

The detailed process is as follows.  For every object $o_i \in \objects$, CBH
first determines whether $\pi_C^1(o_i) = \pi_G^1(o_i)$
(line~\ref{alg:cbh-column}).  If $\pi_C^1(o_i) = \pi_G^1(o_i)$, the heuristic
value $h$ remains unchanged if the objects behind $o_i$ are all at their goals.
Otherwise, there exists either an object currently deeper than $o_i$ that needs
to be evacuated, or an object in another stack that needs to be inserted to
$\pi^1_G(o_i)$ at a depth deeper than $\pi^2_G(o_i)$.  Thus $o_i$ must be taken
out of its goal stack.  This requires two additional actions
(line~\ref{alg:cbh-same}).

If $\pi_C^1(o_i) \neq \pi_G^1(o_i)$, 
it takes at least 1 action for $o_i$ to be moved to $\pi^1_G(o_i)$ (line~\ref{alg:cbh-diff1}).
However if the empty locations in the stacks other than 
$\pi^1_C(o_i)$ and $\pi^1_G(o_i)$ cannot contain all
the objects in front of $\pi_C(o_i)$ and $\pi_G(o_i)$, 
$o_i$ must be moved to an intermediate stack.
This requires $2$ actions (line~\ref{alg:cbh-diff2}).

\begin{algorithm}
\begin{small}
	\DontPrintSemicolon
    $h \gets 0$\;
    \For{\normalfont~$o_i \in \objects$}{
        \If{\normalfont $\pi^{1}_{C}(o_i)$ = $\pi^{1}_{G}(o_i)$ \label{alg:cbh-column}}
        {
            \lIf{$\exists o_j \in \objects, \pi^{1}_{C}(o_j) = \pi^{1}_{C}(o_i), 
             \pi^{2}_{C}(o_j) > \pi^{2}_{C}(o_i), \pi_{C}(o_j) \neq \pi_{G}(o_j)$}
             {$h \gets h + 2$}
            \label{alg:cbh-same}
        }
        \Else{
            \lIf{$(w + 1)d - n < \pi^{2}_{C}(o_i) + \pi^{2}_{G}(o_i) - 1$}
            {$h \gets h + 2$ \label{alg:cbh-diff2}}
            \lElse{$h \gets h + 1$ \label{alg:cbh-diff1}}
        }
    }
    \Return~h\;
	\caption{CBH($\pi_C, \pi_G$)}
	\label{alg:cbh}
\end{small}
\end{algorithm}

An example of DBH and CBH calculation is shown in Fig.~\ref{fig:heuristic-example}.
The running time for DBH  is $O(d)$, so totally $O(nd)$ for both DBH1 and DBHn.
CBH runs in $O(n)$ time when dealing the objects in each stack in a bottom-up manner.

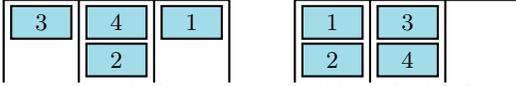
\begin{figure}
    \centering 
    \begin{tabular}{ccc}
        \begin{tikzpicture}
            \foreach \nodeName/\nodeLocation in {3/{(0.5, 1)},4/{(1.5, 1)},2/{(1.5, 0.5)},1/{(2.5, 1)}}{
                \node[thick,shape=rectangle,draw=black,fill=node_fill,text=black,minimum width=0.8cm,minimum height=0.4cm] 
                (\nodeName) at \nodeLocation {\small $\nodeName$}; 
            }
            \draw[-,thick] (0,0.2) to (0,1.3);
            \draw[-,thick] (0,1.3) to (3,1.3);
            \draw[-,thick] (3,1.3) to (3,0.2);
            \draw[-,thick] (2,1.3) to (2,0.2);
            \draw[-,thick] (1,1.3) to (1,0.2);
        \end{tikzpicture}
        &&
        \begin{tikzpicture}
            \foreach \nodeName/\nodeLocation in {1/{(0.5, 1)},2/{(0.5, 0.5)},3/{(1.5, 1)},4/{(1.5, 0.5)}}{
                \node[thick,shape=rectangle,draw=black,fill=node_fill,text=black,minimum width=0.8cm,minimum height=0.4cm] 
                (\nodeName) at \nodeLocation {\small $\nodeName$}; 
            }
            \draw[-,thick] (0,0.2) to (0,1.3);
            \draw[-,thick] (0,1.3) to (3,1.3);
            \draw[-,thick] (3,1.3) to (3,0.2);
            \draw[-,thick] (2,1.3) to (2,0.2);
            \draw[-,thick] (1,1.3) to (1,0.2);
        \end{tikzpicture}
    \end{tabular}
    \caption{An example for heuristic calculation. 
    The left figure denotes $\pi_C$, while the right one denotes $\pi_G$.
    The heuristic values are  
    $h_{\text{DBH1}} = \max \{3, 1, 3, 4\} = 4$, 
    $h_{\text{DBHn}} = 3 + 1 + 3 + 4 = 11$, 
    $h_{\text{CBH}} = 2 + 1 + 2 + 2 = 7$, 
    The optimal solution for this problem costs $9$ steps.}
    \label{fig:heuristic-example}
\end{figure}

An alternate optimal solver is {\em bidirectional heuristic search}
(BHPA)~\cite{Poh69}.  It runs two A* searches simultaneously: One starts from
$\pi_I$ and searches for $\pi_G$; the other starts from $\pi_G$ and searches
for $\pi_I$. BHPA terminates when it finds a path with length $\mu \leq \max\{f_I, f_G\}$.  
Here $f_I$ and $f_G$ denote the minimum $f$-values in the two search fringes, respectively.

By multiplying the heuristic value with a weight $\omega > 1$, 
{\em Weighted A* Search}~\cite{Pea84} generates $\omega$-approximate solutions.
It runs significantly faster than A* search.
Weighted A* is denoted as A*$(\omega)$ and weighted BHPA as BHPA$(\omega)$.


\begin{remark}
Other algorithms, including, but not limited to, 
ALT~\cite{GolHar05}, ID~\cite{StaKor11}, CBS~\cite{ShaSte+15}, ILP~\cite{YuLaV16},
although efficient in solving general search or \pmg\ problems, 
are expected to underperform on \lsr\ because of the high density and lack of parallel movements.
Details are omitted due to the space.
\end{remark}

\section{Experimental Results}\label{sec:experiments}
\pgfplotsset{every linear axis/.append style={
    font=\scriptsize,
    height = 0.5 * \linewidth,
    width = \linewidth,
    xtick=data,
    x label style={at={(axis description cs:0.5,-0.1)}},
    y label style={at={(axis description cs:-0.05,0.5)}},
    ymajorgrids=true,
    grid style=dashed
}}
\tikzset{every picture/.style={font=\footnotesize}}
This section presents experimental validation for the algorithms introduced in this paper.
For each problem setup $(w, d, n)$, $100$ randomly generated \lsr\ instances were created.
The experiments were conducted with varying values for $w$, $d$, and $n$.
Unsurprisingly, altering the values of the parameters had little effect 
on the overall problem. For brevity, this section details the specific scenario where 
$w = 5, d = 5,$ and $n$ varies.\footnote{~Detailed results are provided in Appendix~\ref{sec:table}.}

Both the {\em success rate} and {\em average cost} are evaluated for each problem setup.
The success rate is the percentage of instances that generated a solution before 
a five second timeout occurred. 
The quality of solutions is presented as the average number of actions 
$|A|$.

All experiments were executed on a 
Intel\textsuperscript{\textregistered} Core\textsuperscript{TM} i7-6900K CPU with 32GB RAM at 2133MHz.

Polynomial algorithms are tested with naive implementations.
Poly-D is the simple $O(wd^2)$ algorithm in Section~\ref{sec:structure}.
Both the Poly-D and Poly-LSR algorithms are able to solve \lsr\ problems with $1000$ objects in 1 second,
which is already beyond a practical number.
Poly-D generates better solutions when $n$ is low, 
e.g., when $n = 24, |A_{\text{Poly-D}}| = 97.57, |A_{\text{Poly-LSR}}| = 183.97$.
The performance flips when there is more than $1000$ objects.
For example, when $w = 50, d = 40, n = 2000$, 
Poly-D uses  $\sim\hspace*{-.25em}63,000$ steps to solve a problem,
while Poly-LSR uses $\sim\hspace*{-.25em}50,000$ steps.


The heuristics described in Section~\ref{sec:optimal} are tested with the A* algorithm.
As evidenced in Fig.~\ref{fig:eval-heuristic},
the admissable heuristic CBH has a higher success rate than its competitors.
The entry CBH+DBH1 takes the maximum of the two heuristic values, 
but does not provide better performance. 
This is because of the $O(nd)$ overhead for calculating DBH1.

\begin{figure}[htp]
    \begin{tikzpicture}
        \begin{axis}[
            cycle list={
                {teal,every mark/.append style={fill=teal!80!black},mark=*}, 
                {orange,every mark/.append style={fill=orange!80!black},mark=triangle*}, 
                {cyan!60!black,every mark/.append style={fill=cyan!80!black},mark=square*}, 
                {red!70!white,mark=diamond*}
            },
            height = 0.5 * \linewidth,
            xlabel={Number of Objects},
            ylabel={Success Rate},
            xmin=0, xmax=24, ymin=0, ymax=100
            ]
        \addplot coordinates{
        (2,100)(4,100)(6,100)(8,50)(10,3)(12,0)
        (14,0)(16,0)(18,0)(20,0)(22,0)(24,0)
        };
        \addplot coordinates{
        (2,100)(4,100)(6,100)(8,84)(10,10)(12,0)
        (14,0)(16,0)(18,0)(20,0)(22,0)(24,0)
        };
        \addplot coordinates{
        (2,100)(4,100)(6,100)(8,100)(10,100)(12,94)
        (14,57)(16,20)(18,3)(20,0)(22,0)(24,0)
        };
        \addplot coordinates{
        (2,100)(4,100)(6,100)(8,100)(10,100)(12,94)
        (14,57)(16,20)(18,3)(20,0)(22,0)(24,0)
        };
        \legend{DBH1,DBHn,CBH,CBH+DBH1}
        \end{axis}
    \end{tikzpicture}
    \caption{Success rate of heuristics.}
    \label{fig:eval-heuristic}
\end{figure}
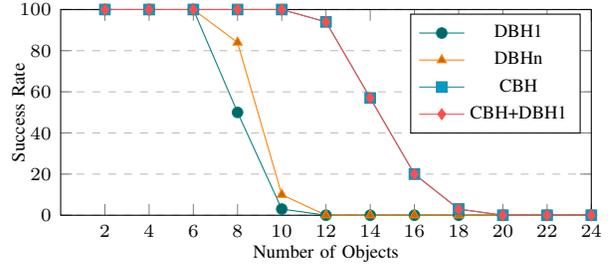


CBH is used to guide the search algorithms in Section~\ref{sec:optimal}.
Fig.~\ref{fig:eval-algorithm} shows that with the help of CBH, A* runs much
faster than {\em Breath First Search} (BFS) and its bidirectional version
Bi-BFS.  As expected \cite{HarNilRap68}, it also beats BHPA.

The weighted search algorithms generate solutions close to the optimal.
For example, when $n = 10, |A_{\text{A*(2)}}| = 14.69, |A_{\text{BHPA(2)}}| = 14.44$,
while the exact solution $|A_{\text{opt}}| = 13.01$.
BHPA(2) runs faster than A*(2), and also generates better solutions, e.g., 
when $n = 18, |A_{\text{A*(2)}}| = 33.42, |A_{\text{BHPA(2)}}| = 32.92$.
This is because as the heuristic becomes inadmissable,
the termination criterion of BHPA is more easily satisfied.

\begin{figure}[htp]
    \begin{tikzpicture}
        \begin{axis}[
            cycle list={
                {red,every mark/.append style={fill=red!80!black},mark=*}, 
                {blue,every mark/.append style={fill=blue!80!black},mark=o}, 
                {yellow!60!black,every mark/.append style={fill=yellow!80!black},mark=square*}, 
                {black,every mark/.append style={fill=black!80!black},mark=square},
                {brown,mark=star,every mark/.append style={fill=brown!80!black},mark=triangle*},
                {teal,every mark/.append style={fill=teal!80!black},mark=triangle}
            },
            height = 0.5 * \linewidth,
            legend columns=6,
            legend style={at={(0.5,-0.25)},anchor=north},
            xlabel={Number of Objects},
            ylabel={Success Rate},
            xmin=0, xmax=24, ymin=0, ymax=100
            ]
            \addplot coordinates{
            (2,100)(4,100)(6,88)(8,0)(10,0)(12,0)
            (14,0)(16,0)(18,0)(20,0)(22,0)(24,0)
            };
            \addplot coordinates{
            (2,100)(4,100)(6,100)(8,95)(10,5)(12,0)
            (14,0)(16,0)(18,0)(20,0)(22,0)(24,0)
            };
            \addplot coordinates{
            (2,100)(4,100)(6,100)(8,100)(10,100)(12,94)
            (14,57)(16,20)(18,3)(20,0)(22,0)(24,0)
            };
            \addplot coordinates{
            (2,100)(4,100)(6,100)(8,100)(10,100)(12,93)
            (14,55)(16,20)(18,4)(20,0)(22,0)(24,0)
            };
            \addplot coordinates{
            (2,100)(4,100)(6,100)(8,100)(10,100)(12,100)
            (14,100)(16,100)(18,100)(20,94)(22,64)(24,54)
            };
            \addplot coordinates{
            (2,100)(4,100)(6,100)(8,100)(10,100)(12,100)
            (14,100)(16,100)(18,100)(20,99)(22,75)(24,56)
            };
        \legend{BFS,Bi-BFS,A*,BHPA,A*(2),BHPA(2)}
        \end{axis}
    \end{tikzpicture}
    \caption{Success rate of algorithms}
    \label{fig:eval-algorithm}
\end{figure}
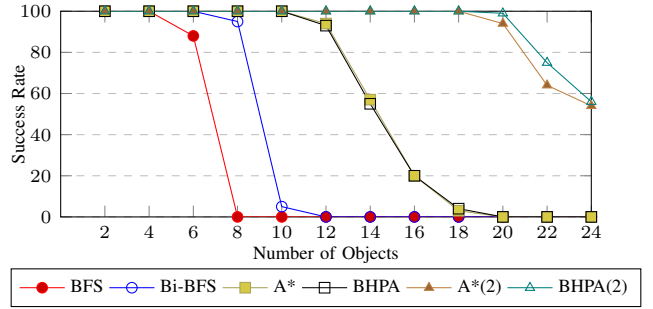

\section{Conclusion}\label{sect:conclusion}
This paper describes a novel approach to the object rearrangement problem where 
objects are stored in stack-like containers. Fundamental optimality 
bounds are provided 
by modeling these challenges as pebble motion problems. 
While optimal solvers exist to tackle pebble motion problems, these methods 
are ill-suited for the stack object rearrangement problem due to the scalability of 
such approaches. To overcome this shortcoming, an algorithmic solution is presented 
that produces sub-optimal solutions albeit much faster than optimal solvers. The
utility of the proposed method is validated via extensive experimental evaluation. 

It is not immediately clear whether the techniques presented herein for solving
the labeled stack rearrangement problem with a single manipulator can be
directly extended to the multi-arm scenario.  For example, feasibility tests will be
complicated by the need to reason in the joint configuration space of the
manipulator arms in order to generate collision-free trajectories for the
manipulators.  Whereas a single manipulator performs a series of sequential
\tpapa{}s, a multi-arm setup \textit{may} permit some level of
parallelizability which may have consequential effects on the optimal solution
for specific scenarios.



{\small
\bibliographystyle{abbrv}
\bibliography{file}{17mrs-paper-with-appendices.bbl}
}

\clearpage
\appendices
\onecolumn
\section{Detailed Experimental Results}\label{sec:table}
\newcolumntype{g}{>{\columncolor[RGB]{224,255,224}}c}
\newcolumntype{y}{>{\columncolor[RGB]{255,255,224}}c}
\newcolumntype{b}{>{\columncolor[RGB]{224,224,255}}c}
\newcolumntype{r}{>{\columncolor[RGB]{255,224,224}}c}
\begin{table*}[ht]    
    \caption{Average Solution Cost of the Algorithms}
    \setlength\tabcolsep{4.5pt} 
    \begin{threeparttable}
        \centering
        \begin{tabular}{|c|c|c|g|y|y|b|b|b|b|r|r|r|r|r|r|}
            \hline 
$w$&$d$&$n$	&Opt.Val. &Poly-D &Poly-LSR	&DBH1	&DBHn	&CBH	&CBH+DBH1	&BFS	    &Bi-BFS	&A*	    &BHPA	&A*(2)	&BHPA(2)\\ \hline \hline
2&3&6	&12.11	&14.76	&31.14	&12.11	&12.84	&12.11	&12.11	&12.11	&12.11	&12.11	&12.11	&13.12	&13.49\\ \hline
3&3&9	&17.16	&24.32	&53.95	&16.13*	&17.58*	&17.16	&17.16	&11.17*	&17.16	&17.16	&17.16	&19.8	&19.78\\ \hline
4&3&12	&21.95	&34.25	&83.88	&NA	    &NA	    &21.52*	&21.52*	&NA	    &16.4*	&21.52*	&21.1*	&25.35	&25.2\\ \hline
5&3&15	&NA	    &44.49	&114.85	&NA	    &NA	    &24.72*	&24.72*	&NA	    &NA	    &24.72*	&24.94*	&31.85	&31.32\\ \hline
6&3&18	&NA	    &55.1	&146.53	&NA	    &NA	    &NA	    &NA	    &NA	    &NA	    &NA      &NA	    &37.28	&36.77\\ \hline
7&3&21	&NA	    &66.36	&178.49	&NA	    &NA	    &NA	    &NA	    &NA	    &NA	    &NA      &NA	    &44.43*	&43.98*\\ \hline
8&3&24	&NA	    &76.58	&217.77	&NA	    &NA	    &NA	    &NA	    &NA	    &NA	    &NA      &NA	    &49.73*	&49.82*\\ \hline
9&3&27	&NA	    &88	    &256.47	&NA	    &NA	    &NA	    &NA	    &NA	    &NA	    &NA      &NA	    &56.71*	&56.15*\\ \hline \hline

2&3&6	&12.11	&14.72	&39.52	&12.11	&12.84	&12.11	&12.11	&12.11	&12.11	&12.11	&12.11	&13.12	&13.49\\ \hline
2&4&8	&18.17	&24.25	&58.74	&18.17	&20.47	&18.17	&18.17	&18.17	&18.17	&18.17	&18.17	&20.57	&20.27\\ \hline
2&5&10	&25.08	&35.65	&78.96	&21.48*	&25.23*	&25.08	&25.08	&NA	    &25.02*	&25.08	&25.08	&28.62	&28.4\\ \hline
2&6&12	&32.57  &48.97	&101.2	&NA	    &NA	    &30.23*	&30.2*	&NA	    &25.0*	&30.23*	&29.57*	&37.2	&37.41\\ \hline
2&7&14	&NA	    &64.58	&124.61	&NA	    &NA	    &35.0*	&35.0*	&NA	    &NA	    &35.0*	&34.29*	&46.21*	&46.38\\ \hline
2&8&16	&NA	    &79.91	&147.63	&NA	    &NA	    &36.5*	&36.5*	&NA	    &NA	    &36.5*	&NA	    &53.88*	&54.94*\\ \hline
2&9&18	&NA	    &98.63	&172.42	&NA	    &NA	    &NA	    &NA	    &NA	    &NA	    &NA	    &NA	    &61.6*	&61.51*\\ \hline
2&10&20	&NA	    &118.59	&197.3	&NA	    &NA	    &NA	    &NA	    &NA	    &NA	    &NA	    &NA	    &68.25*	&69.45*\\ \hline \hline

5&5&2	&1.74	&1.86	&15.35	&1.74	&1.74	&1.74	&1.74	&1.74	&1.74	&1.74	&1.74	&1.74	&1.74\\ \hline
5&5&4	&4.2	&5.04	&30.73	&4.2	&4.38	&4.2	&4.2	&4.2	&4.2	&4.2	&4.2	&4.29	&4.26\\ \hline
5&5&6	&6.87	&9.49	&45.82	&6.87	&7.37	&6.87	&6.87	&6.65*	&6.87	&6.87	&6.87	&7.12	&7.18\\ \hline
5&5&8	&9.62	&15.89	&61.31	&8.56*	&10.35*	&9.62	&9.62	&NA	    &9.6*	&9.62	&9.62	&10.3	&10.28\\ \hline
5&5&10	&13.01	&22.63	&76.93	&9.33*	&11.9*	&13.01	&13.01	&NA	    &10.2*	&13.01	&13.01	&14.69	&14.44\\ \hline
5&5&12	&16.01	&30.42	&92.08	&NA	    &NA	    &15.87*	&15.87*	&NA	    &NA	    &15.87*	&15.86*	&18.49	&18.32\\ \hline
5&5&14	&19.74 	&38.86	&107.13	&NA	    &NA	    &18.86*	&18.86*	&NA	    &NA	    &18.86*	&18.85*	&22.79	&22.69\\ \hline
5&5&16	&NA	    &48.1	&122.74	&NA	    &NA	    &21.5*	&21.5*	&NA	    &NA	    &21.5*	&21.65*	&27.54	&27.26\\ \hline
5&5&18	&NA	    &57.53	&138.08	&NA	    &NA	    &23.33*	&23.33*	&NA	    &NA	    &23.33*	&23.25*	&33.42	&32.92\\ \hline
5&5&20	&NA	    &68.17	&153.61	&NA	    &NA	    &NA	    &NA	    &NA	    &NA	    &NA	    &NA	    &38.23*	&37.88*\\ \hline
5&5&22	&NA	    &80.04	&168.41	&NA	    &NA	    &NA	    &NA	    &NA	    &NA	    &NA	    &NA	    &44.16*	&44.12*\\ \hline
5&5&24	&NA	    &97.57	&183.97	&NA	    &NA	    &NA	    &NA	    &NA	    &NA	    &NA	    &NA	    &55.69*	&55.59*\\ \hline
        \end{tabular} 
        \begin{tablenotes} 
            \item[*] Failed instances are not involved in calculating the average cost,
            which makes the data point less informative.
            \item NA: all test cases are failed.
        \end{tablenotes}
    \end{threeparttable}
    \label{tab:evaluation}
\end{table*}


In Table~\ref{tab:evaluation}:
\begin{itemize}
    \item The leftmost $3$ columns denote different setups of \lsr.   
    \item Green columns: optimal costs, 
    achieved by running the A* algorithm with the timeout set to $300$ seconds.\\
    \textbf{Note}: These instances are bottlenecked by the memory requirements of the problem.
    \item Yellow columns: polynomial algorithms.
    \item Blue columns: heuristics.
    \item Red columns: search algorithms.
\end{itemize}
\newpage
\begin{figure}[!bp]
\subfloat{%
    \begin{tikzpicture}
        \begin{axis}[
            height = 0.25 * \linewidth,
            width = 0.5\linewidth,
            legend columns=4,
            legend style={at={(0.5,-0.3)},anchor=north},
            cycle list={
                {teal,every mark/.append style={fill=teal!80!black},mark=*}, 
                {orange,every mark/.append style={fill=orange!80!black},mark=triangle*}, 
                {cyan!60!black,every mark/.append style={fill=cyan!80!black},mark=square*}, 
                {red!70!white,mark=diamond*}
            },
            xlabel={Number of Slots $\cdot$ Number of Objects},
            ylabel={Success Rate},
            xticklabels={$2\cdot 6$,$3\cdot 9$,$4\cdot 12$,$5\cdot 15$,$6\cdot 18$,$7\cdot 21$,$8\cdot 24$,$9\cdot 27$},
            xmin=2, xmax=9, ymin=0, ymax=100
            ]
        \addplot coordinates{
        (2,100)(3,71)(4,0)(5,0)(6,0)(7,0)
        (8,0)(9,0)};
        \addplot coordinates{
        (2,100)(3,71)(4,0)(5,0)(6,0)(7,0)
        (8,0)(9,0)};
        \addplot coordinates{
        (2,100)(3,100)(4,83)(5,18)(6,0)(7,0)
        (8,0)(9,0)};
        \addplot coordinates{
        (2,100)(3,100)(4,83)(5,18)(6,0)(7,0)
        (8,0)(9,0)};
        \legend{DBH1,DBHn,CBH,CBH+DBH1}
        \end{axis}
    \end{tikzpicture}
}%
\subfloat{%
    \begin{tikzpicture}
        \begin{axis}[
            height = 0.25 * \linewidth,
            width = 0.5\linewidth,
            cycle list={
                {teal,every mark/.append style={fill=teal!80!black},mark=*}, 
                {orange,every mark/.append style={fill=orange!80!black},mark=triangle*}, 
                {cyan!60!black,every mark/.append style={fill=cyan!80!black},mark=square*}, 
                {red!70!white,mark=diamond*}
            },
            legend columns=4,
            legend style={at={(0.5,-0.3)},anchor=north},
            xlabel={Depth $\cdot$ Number of Objects},
            ylabel={Success Rate},
            xticklabels={$3\cdot 6$,$4\cdot 8$,$5\cdot 10$,$6\cdot 12$,$7\cdot 14$,$8\cdot 16$,$9\cdot 18$,$10\cdot 20$},
            xmin=3, xmax=10, ymin=0, ymax=100
            ]
        \addplot coordinates{
        (3,100)(4,100)(5,25)(6,0)(7,0)(8,0)
        (9,0)(10,0)};
        \addplot coordinates{
        (3,100)(4,100)(5,22)(6,0)(7,0)(8,0)
        (9,0)(10,0)};
        \addplot coordinates{
        (3,100)(4,100)(5,100)(6,57)(7,12)(8,2)
        (9,0)(10,0)};
        \addplot coordinates{
        (3,100)(4,100)(5,100)(6,56)(7,12)(8,2)
        (9,0)(10,0)};
        \legend{DBH1,DBHn,CBH,CBH+DBH1}
        \end{axis}
    \end{tikzpicture}
}
    \caption{Success rate of heuristics.}
    \label{fig:eval-heuristic-all}
\end{figure}

\begin{figure}[!bp]
\subfloat{%
    \begin{tikzpicture}
        \begin{axis}[
            height = 0.25 * \linewidth,
            width = 0.5\linewidth,
            legend columns=3,
            legend style={at={(0.5,-0.3)},anchor=north},
            cycle list={
                {red,every mark/.append style={fill=red!80!black},mark=*}, 
                {blue,every mark/.append style={fill=blue!80!black},mark=o}, 
                {yellow!60!black,every mark/.append style={fill=yellow!80!black},mark=square*}, 
                {black,every mark/.append style={fill=black!80!black},mark=square},
                {brown,mark=star,every mark/.append style={fill=brown!80!black},mark=triangle*},
                {teal,every mark/.append style={fill=teal!80!black},mark=triangle}
            },
            xlabel={Number of Slots $\cdot$ Number of Objects},
            ylabel={Success Rate},
            xticklabels={$2\cdot 6$,$3\cdot 9$,$4\cdot 12$,$5\cdot 15$,$6\cdot 18$,$7\cdot 21$,$8\cdot 24$,$9\cdot 27$},
            xmin=3, xmax=10, ymin=0, ymax=100
            ]
        \addplot coordinates{
        (3,100)(4,6)(5,0)(6,0)(7,0)(8,0)
        (9,0)(10,0)};
        \addplot coordinates{
        (3,100)(4,100)(5,5)(6,0)(7,0)(8,0)
        (9,0)(10,0)};
        \addplot coordinates{
        (3,100)(4,100)(5,83)(6,18)(7,0)(8,0)
        (9,0)(10,0)};
        \addplot coordinates{
        (3,100)(4,100)(5,69)(6,17)(7,0)(8,0)
        (9,0)(10,0)};
        \addplot coordinates{
        (3,100)(4,100)(5,100)(6,100)(7,100)(8,99)
        (9,95)(10,92)};
        \addplot coordinates{
        (3,100)(4,100)(5,100)(6,100)(7,100)(8,98)
        (9,98)(10,95)};
        \legend{BFS,Bi-BFS,A*,BHPA,A*(2),BHPA(2)}
        \end{axis}       
    \end{tikzpicture}
}
\subfloat{%
    \begin{tikzpicture}
        \begin{axis}[
            height = 0.25 * \linewidth,
            width = 0.5\linewidth,
            cycle list={
                {red,every mark/.append style={fill=red!80!black},mark=*}, 
                {blue,every mark/.append style={fill=blue!80!black},mark=o}, 
                {yellow!60!black,every mark/.append style={fill=yellow!80!black},mark=square*}, 
                {black,every mark/.append style={fill=black!80!black},mark=square},
                {brown,mark=star,every mark/.append style={fill=brown!80!black},mark=triangle*},
                {teal,every mark/.append style={fill=teal!80!black},mark=triangle}
            },
            legend columns=3,
            legend style={at={(0.5,-0.3)},anchor=north},
            xlabel={Depth $\cdot$ Number of Objects},
            ylabel={Success Rate},  
            xticklabels={$3\cdot 6$,$4\cdot 8$,$5\cdot 10$,$6\cdot 12$,$7\cdot 14$,$8\cdot 16$,$9\cdot 18$,$10\cdot 20$},
            xmin=3, xmax=10, ymin=0, ymax=100
            ]
        \addplot coordinates{
        (3,100)(4,100)(5,0)(6,0)(7,0)(8,0)
        (9,0)(10,0)};
        \addplot coordinates{
        (3,100)(4,100)(5,99)(6,6)(7,0)(8,0)
        (9,0)(10,0)};
        \addplot coordinates{
        (3,100)(4,100)(5,100)(6,57)(7,12)(8,2)
        (9,0)(10,0)};
        \addplot coordinates{
        (3,100)(4,100)(5,100)(6,47)(7,7)(8,0)
        (9,0)(10,0)};
        \addplot coordinates{
        (3,100)(4,100)(5,100)(6,100)(7,98)(8,83)
        (9,67)(10,36)};
        \addplot coordinates{
        (3,100)(4,100)(5,100)(6,100)(7,100)(8,90)
        (9,69)(10,38)};
        \legend{BFS,Bi-BFS,A*,BHPA,A*(2),BHPA(2)}
        \end{axis}
    \end{tikzpicture}
}
    \caption{Success rate of algorithms}
    \label{fig:eval-algorithm-all}
\end{figure}

\end{document}